\documentclass[english]{article}
\usepackage{lmodern}

\usepackage[T1]{fontenc}
\usepackage[a4paper]{geometry}
\geometry{verbose,tmargin=1in,bmargin=1in,lmargin=1in,rmargin=1in}
\synctex=-1
\usepackage{dsfont}
\usepackage{color}
\usepackage{babel}
\usepackage{amsmath}
\usepackage{amsthm}
\usepackage{amssymb}
\usepackage{stmaryrd}
\usepackage{graphicx}
\usepackage{setspace}
\usepackage{esint}
\usepackage{caption}
\newcommand*\rot{\rotatebox{90}}
\usepackage[authoryear]{natbib}
\setstretch{1.3}
\usepackage[unicode=true,bookmarks=true,bookmarksnumbered=false,bookmarksopen=false,breaklinks=false,pdfborder={0 0 0},pdfborderstyle={},backref=page,colorlinks=true]{hyperref}
\hypersetup{pdftitle={VBPI-SIBranch},pdfauthor={Tianyu Xie, Frederick A. Matsen IV, Marc A. Suchard, Cheng Zhang},linkcolor=RoyalBlue,citecolor=RoyalBlue}
\usepackage{float}
\usepackage{subfig}
\usepackage[dvipsnames,svgnames,x11names,hyperref]{xcolor}
\usepackage{nicefrac}

\usepackage{epstopdf}
\usepackage{natbib}
\usepackage{url} % not crucial - just used below for the URL 
\usepackage{caption}
\usepackage{multirow}
\usepackage{amssymb}
\usepackage{multicol}
\usepackage{booktabs}
\usepackage{tikz}
\usepackage{bbm}
\usepackage{bm}
\usetikzlibrary{positioning, bayesnet}
\usepackage{algorithm2e}
\RestyleAlgo{ruled}

\newtheorem{theorem}{Theorem}

\newtheorem{definition}{Definition}
\newtheorem{proposition}{Proposition}

\usepackage[textsize=tiny]{todonotes}

\makeatletter
\newcommand{\myfnsymbol}[1]{%
  \expandafter\@myfnsymbol\csname c@#1\endcsname
}
% Mapping of how the \thanks symbols will be interpreted sequentially
\newcommand{\@myfnsymbol}[1]{%
  \ifcase #1
    % 0
  \or 1% 1
  \or 2% 2
  \or 3% 3
  \or 4% 4
  \or \TextOrMath{\textasteriskcentered}{*}% 5
  \or \TextOrMath{\textdagger}{\dagger}% 6
  \fi
}
% Just to make things explicit in the code what it means
\newcommand{\affiliationA}{\@myfnsymbol{1}}
\newcommand{\affiliationB}{\@myfnsymbol{2}}
\newcommand{\affiliationC}{\@myfnsymbol{3}}
\newcommand{\affiliationD}{\@myfnsymbol{4}}
\newcommand{\correspondingA}{\@myfnsymbol{5}}
\makeatother

\begin{document}

\title{Variational Bayesian Phylogenetic Inference with Semi-implicit Branch Length Distributions}

\author{
Tianyu Xie\textsuperscript{\affiliationA},
Frederick A. Matsen IV\textsuperscript{\affiliationB},
   Marc A. Suchard\textsuperscript{\affiliationC},
Cheng Zhang\textsuperscript{\affiliationD,\correspondingA}
}

\date{
}

% Thanks notes for title uses \myfnsymbol
\renewcommand{\thefootnote}{\myfnsymbol{footnote}}
\maketitle
% Layout the \thanks notes in the order you want
\footnotetext[1]{School of Mathematical Sciences, Peking University,
   Beijing, 100871, China. Email: tianyuxie@pku.edu.cn}%
\footnotetext[2]{Computational Biology Program, Fred Hutchinson Cancer Research Center,
   Department of Genome Sciences and Department of Statistics,
   University of Washington,
   Seattle, WA 98195, USA. Email: matsen@fredhutch.org}%
\footnotetext[3]{Department of Biostatistics, Department of Biomathematics, and Department of Human Genetics, University of California, Los Angeles, CA 90095, USA. Email: msuchard@ucla.edu}%
\footnotetext[4]{School of Mathematical Sciences and Center for Statistical Science, Peking University, Beijing, 100871, China. Email: chengzhang@math.pku.edu.cn}
\footnotetext[5]{Corresponding author}%

\setcounter{footnote}{0}% Restart footnote counter
% Footnotes for rest of document uses \fnsymbol (or whatever you choose)
\renewcommand{\thefootnote}{\fnsymbol{footnote}}

\begin{abstract}
Reconstructing the evolutionary history relating a collection of molecular sequences is the main subject of modern Bayesian phylogenetic inference.
However, the commonly used Markov chain Monte Carlo methods can be inefficient due to the complicated space of phylogenetic trees, especially when the number of sequences is large.
An alternative approach is variational Bayesian phylogenetic inference (VBPI) which transforms the inference problem into an optimization problem.
While effective, the default diagonal lognormal approximation for the branch lengths of the tree used in VBPI is often insufficient to capture the complexity of the exact posterior.
In this work, we propose a more flexible family of branch length variational posteriors based on semi-implicit hierarchical distributions using graph neural networks.
We show that this semi-implicit construction emits straightforward permutation equivariant distributions, and therefore can handle the non-Euclidean branch length space across different tree topologies with ease.
To deal with the intractable marginal probability of semi-implicit variational distributions, we develop several alternative lower bounds for stochastic optimization.
We demonstrate the effectiveness of our proposed method over baseline methods on benchmark data examples, in terms of both marginal likelihood estimation and branch length posterior approximation.
\end{abstract}

\begin{keywords}
Bayesian phylogenetics, variational inference, semi-implicit distributions, lower bounds.
\end{keywords}

\section{Introduction}\label{sec:introduction}
Bayesian phylogenetic inference is a fundamental statistical framework in molecular evolution and systematics that aims to reconstruct the evolutionary histories among taxa or other biological entities, with a wide range of applications including genomic epidemiology \citep{Du_Plessis2021-tq} and conservation genetics \citep{DeSalle2004-mm}.
Given observed biological sequences (e.g., DNA, RNA, protein) and a model of molecular evolution, Bayesian phylogenetic inference seeks to estimate the posterior distribution of phylogenetic trees.
The exact computation of this posterior is intractable as it would require integrating out all possible tree topologies and branch lengths.
Thus, practitioners use approximation methods.
A typical approach is Markov chain Monte Carlo (MCMC) \citep{yang1997bayesian, Mau99, Larget1999MarkovCM, ronquist2012mrbayes} that relies on efficient proposal mechanisms to explore the tree space.
As the tree space, however, contains both continuous and discrete components (e.g., the branch lengths and the tree topologies),  phylogenetic posteriors are often complex multimodal distributions. 
Further, tree proposals used in MCMC are often limited to local modifications that lead to low exploration efficiency, and this makes Bayesian phylogenetic inference a challenging task for MCMC algorithms \citep{Lakner08, Hhna2012-pm, Whidden2014QuantifyingME, Dinh2017-oj, hassler2023data}.

An alternative approximate Bayesian inference method is variational inference (VI) \citep{jordan1999introduction, Blei2016VariationalIA}. 
Unlike MCMC, VI seeks the closest member from a family of candidate distributions (i.e., the variational family) to the posterior distribution by minimizing some statistical distance criterion, usually the Kullback-Leibler (KL) divergence.
By converting the inference problem into an optimization problem, VI tends to be faster and easier to scale up to large data \citep{Blei2016VariationalIA}.
Unlike MCMC methods that are asymptotically unbiased, variational approximations are often biased, especially when the variational family of distributions is insufficiently flexible.
The success of VI, therefore, relies on the construction of expressive variational families and efficient optimization procedures.
While classical mean-field VI requires conditionally conjugate models and often suffers from limited approximation power, much progress has been made in recent years to allow for more generic model-agnostic optimization methods \citep{ranganath2014bbvi} and more flexible variational families that have tractable densities \citep{rezende2015NF, dinh2016realnvp, kingma2016IAF, papamakarios2021nf}.
Moreover, variational families can be further expanded by allowing implicit models which have intractable densities but are easy to sample from \citep{Huszar17}.
These implicit models are usually constructed by either pushing forward a simple base distribution through a parameterized map, i.e., deep neural networks \citep{Tran17, AVB, Shi18, SSM} or using a semi-implicit hierarchical architecture \citep{yin2018sivi,titsias2019uivi,Sobolev2019IWHVI}.

Until recently, VI has received limited attention in the field of phylogenetics.
For a fixed tree topology, VI-based approaches have been developed to approximate the posterior of continuous parameters via coordinate ascent \citep{Dang19} and to estimate marginal likelihoods for model comparison \citep{Fourment19}.
However, when taking the tree topology as also random, the design of variational methods can be highly nontrivial, partially due to the absence of an appropriate family of distributions on phylogenetic trees.
\citet{Zhang2019VariationalBP} took the first step in this direction by developing a general framework for variational Bayesian phylogenetics inference (VBPI), where they used a product of a tree topology model and a branch length model to provide variational approximations.
They originally chose the tree topology model to be a subsplit Bayesian network (SBN), a powerful probabilistic graphical model specifically designed for distributions over tree topologies.
Although effective, SBNs require a pre-selected sample of candidate tree topologies that confines their support to a subspace of all possible tree topologies.
Many other approaches have been introduced recently \citep{koptagel2022vaiphy, xie2023artree, mimori2023geophy, Zhou2023PhyloGFN} that remove this constraint and hence may provide more flexible distributions over the entire tree topology space.
The conditional branch length model is often a simple diagonal lognormal distribution that is amortized over tree topologies via either hand-engineered heuristic features \citep{Zhang2019VariationalBP} or learnable topological features \citep{Zhang2023learnable}.
Although there were follow-up works for improved branch length models, e.g., VBPI with normalizing flows \citep{Zhang2020ImprovedVB}, the requirement of permutation equivariant transformations and explicit density adds to the difficulty of architecture design and may also limit the approximation accuracy, especially for complicated real data branch length posteriors.

In this work, we introduce a semi-implicit hierarchical construction for the branch length model in VBPI, with an emphasis on unrooted models.
We show that distributions under this construction can be easily made permutation invariant; therefore, they are naturally suitable for modeling branch lengths across different tree topologies.
To address the intractable density of semi-implicit variational distributions, we adapt ideas from semi-implicit variational inference (SIVI) \citep{yin2018sivi} and importance weighted hierarchical variational inference (IWHVI) \citep{Sobolev2019IWHVI} to design alternative surrogate objectives for optimization.
Our synthetic and real-world experiments show that VBPI with semi-implicit branch length distributions (VBPI-SIBranch) outperforms baseline methods in both marginal likelihood estimation and branch length posterior approximation.

The rest of the paper is organized as follows.
In Section \ref{sec:background}, we introduce the essential ingredients of SIVI methods, phylogenetic models, and the variational Bayesian phylogenetic inference framework. 
In Section \ref{sec:method}, we present our semi-implicit branch length variational distributions, describe two surrogate objective functions for optimization, and prove their statistical properties.
In Section \ref{sec:experiments}, we conduct experiments to compare VBPI-SIBranch to baseline methods in terms of both marginal likelihood estimation and branch length approximation.
We conclude with a discussion in Section \ref{sec:conclusion}.
\section{Background}\label{sec:background}

\subsection{Semi-implicit Variational Inference}
Given observed data $\mathcal{D}$ and random variables $\bm{x}$ that characterize the generation of $\mathcal{D}$, VI reformulates the Bayesian inference of a posterior distribution $P(\bm{x}|\mathcal{D})\propto P(\bm{x},\mathcal{D})$ as an optimization problem by minimizing the distance between $P(\bm{x}|\mathcal{D})$ and a parametrized variational distribution $Q_{\bm{\theta}}(\bm{x})$ which is commonly assumed to have tractable density \citep{jordan1999introduction, Blei2016VariationalIA}. 
The most commonly used distance is the reversed KL divergence defined as $D_{\mathrm{KL}}\left(Q_{\bm{\theta}}(\bm{x})\|P(\bm{x}|\mathcal{D})\right)=\mathbb{E}_{Q_{\bm{\theta}}(\bm{x})}\left[\log Q_{\bm{\theta}}(\bm{x})-\log P(\bm{x}|\mathcal{D})\right]$.
As the posterior distribution $P(\bm{x}|\mathcal{D})$ is often only known up to a constant $P(\mathcal{D})$, in practice we maximize the \emph{evidence lower bound} (ELBO) instead, defined as 
\begin{equation}
L(\bm{\theta}) = \mathbb{E}_{Q_{\bm{\theta}}(\bm{x})}\log\left(
\frac{P(\bm{x},\mathcal{D})}{Q_{\bm{\theta}}(\bm{x})}
\right) =\log P(\mathcal{D}) - D_{\mathrm{KL}}\left(Q_{\bm{\theta}}(\bm{x})\|P(\bm{x}|\mathcal{D})\right)\leq \log P(\mathcal{D}).
\end{equation}
Another popular objective function for VI is the \emph{multi-sample lower bound} \citep{burda2015iwae, Mnih2016VariationalIF}
\begin{equation}
L^K(\bm{\theta}) = \mathbb{E}_{Q_{\bm{\theta}}(\bm{x}^{1:K})}\log\left(\frac{1}{K}\sum_{k=1}^K
\frac{P(\bm{x}^k,\mathcal{D})}{Q_{\bm{\theta}}(\bm{x}^k)}
\right) \leq \log P(\mathcal{D}),
\end{equation}
where one averages over multiple samples with $Q_{\bm{\theta}}(\bm{x}^{1:K})=\prod_{k=1}^KQ_{\bm{\theta}}(\bm{x}^k)$, and we will use $K$ for the number of particles in the rest of the paper.

Beyond the explicit assumptions of $Q_{\bm{\theta}}(\bm{x})$,
\textit{semi-implicit variational inference} (SIVI) \citep{yin2018sivi} assumes a more flexible variational family defined hierarchically as
\begin{equation}\label{eq:sivi-family}
Q_{\bm{\theta}}(\bm{x}) = \int Q_{\bm{\theta}}(\bm{x}|\bm{z})Q_{\bm{\theta}}(\bm{z})\mathrm{d}\bm{z},
\end{equation}
where $\bm{z}$ is a latent variable, $Q_{\bm{\theta}}(\bm{x}|\bm{z})$ is required to be explicit and $Q_{\bm{\theta}}(\bm{z})$ can be implicit.
Compared to standard VI, the above semi-implicit hierarchical construction allows a much richer family that can capture complicated correlation between parameters \citep{yin2018sivi}.
However, the ELBO $L(\bm{\theta})$ used in standard VI is no longer suitable for SIVI as $Q_{\bm{\theta}}(\bm{x})$ is intractable.
The variational family (\ref{eq:sivi-family}) is instead fitted by maximizing the \textit{semi-implicit lower bound} (SILB; \citet[equation 9]{yin2018sivi})
\begin{equation}\label{eq:sivi-surrogate}
\mathbb{E}_{Q_{\bm{\theta}}(\bm{x},\bm{z}^{0})Q_{\bm{\theta}}(\bm{z}^{1:J})}\log \left(\frac{P(\bm{x},D)}{\frac{1}{J+1}\sum_{j=0}^J Q_{\bm{\theta}}(\bm{x}|\bm{z}^j)}\right),
\end{equation}
where $Q_{\bm{\theta}}(\bm{x},\bm{z}^{0})=Q_{\bm{\theta}}(\bm{x}|\bm{z}^{0})Q_{\bm{\theta}}(\bm{z}^{0})$, $Q_{\bm{\theta}}(\bm{z}^{1:J})=\prod_{j=1}^J Q_{\bm{\theta}}(\bm{z}^{j})$, and $J$ is the number of extra samples for an importance-sampling-based estimator of $Q_{\bm{\theta}}(\bm{x})$. 
Here, we put $\bm{z}^0$ together with $\bm{x}$ to emphasize that $\bm{x}$ depends on $\bm{z}^{0}$.
Noticing that samples from $Q_{\bm{\theta}}(\bm{z})$ might not be informative for estimating $Q_{\bm{\theta}}(\bm{x})$, 
\citet{Sobolev2019IWHVI} use an auxiliary reverse model $R_{\bm{\alpha}}(\bm{z}|\bm{x})$ as the importance distribution, and maximize the following \textit{importance weighted lower bound} (IWLB; \citet[equation 4]{Sobolev2019IWHVI})
\begin{equation}\label{eq:iwhvi-surrogate}
\mathbb{E}_{Q_{\bm{\theta}}(\bm{x},\bm{z}^{0})R_{\bm{\alpha}}(\bm{z}^{1:J}|\bm{x})}\log \left(\frac{P(\bm{x},D)}{\frac{1}{J+1}\sum_{j=0}^J \frac{Q_{\bm{\theta}}(\bm{x},\bm{z}^j)}{R_{\bm{\alpha}}(\bm{z}^{j}|\bm{x})}}\right),
\end{equation}
where $R_{\bm{\alpha}}(\bm{z}^{1:J}|\bm{x})=\prod_{j=1}^J R_{\bm{\alpha}}(\bm{z}^{j}|\bm{x})$. Here, $Q_{\bm{\theta}}(\bm{z})$ and $R_{\bm{\alpha}}(\bm{z}|\bm{x})$ need to be explicit. 

\subsection{Phylogenetic Trees}
Given $N$ observed taxa, an important goal in phylogenetic inference is to estimate their evolutionary history, which is often described as a \emph{phylogenetic tree} that includes a tree topology $\tau$ and a vector of non-negative branch lengths $\bm{q}$ for the edges on $\tau$. 

The \textit{tree topology} $\tau$ is a bifurcating tree graph with a node set $V(\tau)$ and an edge set $E(\tau)$. 
There are two types of nodes in $V(\tau)$: nodes with degree one are called \textit{leaf nodes} that represent the existing (observed) taxa; nodes with degree two or three are called \textit{internal nodes} that represent the ancient (unobserved) taxa.
For a rooted tree topology, there exists a unique node with degree two called the \textit{root node} (or the root for simplicity), and the edges in $E(\tau)$ are directed away from the root node. 
For an unrooted tree topology, all the nodes in $V(\tau)$ have one or three degrees, and all the edges in $E(\tau)$ are undirected.
Furthermore, an unrooted tree topology can be converted to a rooted one (and vice versa) by placing a root node on an edge (removing the root node and connecting its two neighbors).
As mentioned above, the focus of our work is on unrooted phylogenetic trees (rather than rooted phylogenetic time trees), however, our algorithm can be easily adapted to rooted phylogenetic trees.
In this article, we use ``tree topology'' for an unrooted tree topology unless otherwise specified.

For each edge $e=(u,v)\in E(\tau)$, there is a non-negative scalar $q_{uv}$ (or equivalently, $q_e$) called the \textit{branch length}. 
The branch length $q_{uv}$ quantifies the amount of evolution along edge $e=(u,v)$, i.e., the expected number of character substitutions between the two neighboring nodes $u$ and $v$.
The vector $\bm{q}=[q_{e}]_{e\in E(\tau)}$ contains all the branch lengths associated with tree topology $\tau$.

\subsection{Bayesian Phylogenetic Inference}
The leaf nodes of a phylogenetic tree correspond to the observed taxa, whose aligned molecular sequences
is represented as a matrix $\bm{Y}=\{\bm{Y}_1,\bm{Y}_2,\ldots,\bm{Y}_N\}\in \Omega^{N\times S}$.
Here, $\Omega$ is the alphabet set of characters (e.g., nucleotides: A, C, G, T) that comprise the sequences, and $S$ is the character sequence length.
For each $1\leq s\leq S$, $\bm{Y}_s$ denotes the observed characters of all taxa at a single aligned position, also called a site, that are homologous, meaning that they all arose from a common character somewhere on the phylogenetic tree through a process of replication and substitution along its edges.
The goal of phylogenetic inference is then to reconstruct $(\tau, \bm{q})$ based on the observed sequence data $\bm{Y}$. 

Given a rooted tree topology $\tau$ and branch lengths $\bm{q}$, the generative process of the observed data $\bm{Y}$ can be described as follows.
Starting from the root node, the evolution along the edges of the tree is governed by a substitution model, often a continuous-time Markov chain (CTMC) that governs the transition probabilities among the characters from a parent node to its child node \citep{jukes1969evolution, tavare1986some}.
Let $\bm{Q}$ be the transition rate matrix.
The transition probability along an edge $(u,v)$ at site $s$ is $P_{a^s_u a^s_v}(q_{uv}) = \left(\exp(q_{uv}\bm{Q})\right)_{a^s_u, a^s_v}$, where $a_{u}^s$ is the character assignment of node $u$ at site $s$.
Assuming that each site evolves independently and identically, the \textit{phylogenetic likelihood} of observing $\bm{Y}$ is obtained by summing out all the possible states of internal nodes as
\begin{equation}\label{eq-likelihood-Y}
P(\bm{Y}|\tau,\bm{q}) = \prod_{s=1}^S P(\bm{Y}_s|\tau,\bm{q}) = \prod_{s=1}^S \sum_{a^s}\eta(a^s_r)\prod_{(u,v)\in E(\tau)}P_{a^s_u a^s_v}(q_{uv}),
\end{equation}
where $r$ represents the root node, $a^s$ ranges over all extensions of $\bm{Y}_s$ to the internal nodes, and $\eta$ is a prior distribution on the root states. 
The phylogenetic likelihood (\ref{eq-likelihood-Y}) can be efficiently evaluated by Felsenstein's pruning algorithm \citep{felsenstein2004inferring}.

For an unrooted tree topology, one can also obtain a valid phylogenetic likelihood from equation (\ref{eq-likelihood-Y}) by placing a root node $r$ on an arbitrary edge at any position. 
In fact, equation (\ref{eq-likelihood-Y}) does not depend on the location of the root node as long as the CTMC is time-reversible and one assumes that the root prior is the stationary distribution of $\bm{Q}$ \citep{Felsenstein81}.
This is also a common choice of $\eta$ in practice.

Given a prior distribution $P(\tau,\bm{q})$ over the space of phylogenetic trees, the joint posterior density takes the following form
\begin{equation}\label{eq-posterior}
P(\tau,\bm{q}|\bm{Y}) = \frac{P(\bm{Y}|\tau, \bm{q})P(\tau,\bm{q})}{P(\bm{Y})}\propto P(\bm{Y}|\tau, \bm{q})P(\tau,\bm{q}).
\end{equation}
A common choice of the prior consists of a uniform distribution over tree topologies and independent exponential distributions over branch lengths \citep{ronquist2012mrbayes}.  

\subsection{Variational Bayesian Phylogenetic Inference}\label{sec:vbpi}
To estimate the phylogenetic posterior in VI, VBPI posits a parameterized variational family $Q_{\bm{\phi},\bm{\psi}}(\tau,\bm{q})$ that is a product of a tree topology model $Q_{\bm{\phi}}(\tau)$ and a branch length model $Q_{\bm{\psi}}(\bm{q}|\tau)$.
The variational approximation is then obtained by maximizing the multi-sample lower bound (MLB)
\begin{equation}\label{eq-lower-bound}
L^{K}(\bm{\phi},\bm{\psi}) = \mathbb{E}_{Q_{\bm{\phi},\bm{\psi}}(\tau^{1:K},\bm{q}^{1:K})}\log \left(\frac{1}{K}\sum_{k=1}^K\frac{P(\bm{Y}|\tau^k,\bm{q}^k) P(\tau^k, \bm{q}^k)}{Q_{\bm{\phi}}(\tau^k)Q_{\bm{\psi}}(\bm{q}^k|\tau^k)}\right),
\end{equation}
where $Q_{\bm{\phi},\bm{\psi}}(\tau^{1:K},\bm{q}^{1:K})\equiv\prod_{k=1}^K Q_{\bm{\phi},\bm{\psi}}(\tau^k,\bm{q}^k)$.
The optimization of equation (\ref{eq-lower-bound}) is done through stochastic gradient ascent (SGA), where the stochastic gradients for tree topology parameters and branch length parameters are obtained by the VIMCO/RWS estimator \citep{Mnih2016VariationalIF, RWS} and the reparameterization trick \citep{VAE} respectively.
Compared to the ELBO, the MLB (\ref{eq-lower-bound}) enables efficient variance-reduced gradient estimators and encourages exploration over the vast and multimodal tree space.
However, as a large $K$ may also reduce the signal-to-noise ratio and deteriorate the training of variational parameters \citep{Rainforth19}, a moderate $K$ is suggested in practice \citep{Zhang22VBPI}.

The tree topology model $Q_{\bm{\psi}}(\tau)$ can be parametrized by SBNs \citep{Zhang2018GeneralizingTP} as follows.
A non-empty subset of the leaf nodes is called a \textit{clade} with a total order $\succ$ (e.g., lexicographical order) on all clades.
An ordered clade pair $(W,Z)$ satisfying $W\cap Z=\emptyset$ and $W\succ Z$ is called a \textit{subsplit}.
An SBN is then defined as a Bayesian network whose nodes take subsplit values or singleton clade values that describe the local topological structures of tree topologies.
For a rooted tree topology, one can find its corresponding node assignment of SBNs by starting from the root node, iterating towards the leaf nodes, and gathering all the visited parent-child subplit pairs.
The SBN-based probability of a rooted tree topology $\tau$ then takes the form
\begin{equation}
p_{\mathrm{sbn}}(T=\tau) = p(S_1=s_1)\prod_{i>1}p(S_i=s_i|S_{\pi_i} = s_{\pi_i}),
\end{equation}
where $S_i$ denotes the subsplit- or singleton-clade-valued random varaibles at node $i$ (node 1 is the root node), $\pi_i$ is the index set of the parents of node $i$ and $\{s_{i}\}_{i\geq 1}$ is the corresponding node assignment.
For unrooted tree topologies, we can also define their SBN-based probabilities by viewing them as rooted tree topologies with unobserved roots and summing out the root positions.
For VBPI, the conditional probabilities in SBNs are often parameterized based on a subsplit support estimated from fast bootstrap or MCMC tree samples \citep{Minh2013UltrafastAF, Zhang22VBPI}.
See more details of SBNs in Appendix \ref{details-sbns}.

As the branch lengths are non-negative, the branch length model $Q_{\bm{\psi}}(\bm{q}|\tau)$ is often taken to be a diagonal lognormal distribution
\begin{equation}\label{eq-branch-lognormal}
Q_{\bm{\psi}}(\bm{q}|\tau) = \prod_{e\in E(\tau)} p^{\textrm{Lognormal}}\left(q_e\,|\,\mu(e,\tau),\sigma(e,\tau)\right),
\end{equation}
where $\mu(e,\tau)$ and $\sigma(e,\tau)$ are the mean and standard deviation parameters of the lognormal distribution, and are amortized over the tree topologies via shared local structures \citep{Zhang2019VariationalBP} or learnable node features \citep{Zhang2023learnable}.
However, the simple diagonal lognormal variational approximation (\ref{eq-branch-lognormal}) maybe too simple to capture the complicated posterior distributions of branch lengths due to the hierarchical structure of tree topologies.
Although \citet{Zhang2020ImprovedVB} proposed to parameterize $Q_{\bm{\psi}}(\bm{q}|\tau)$ with normalizing flows (VBPI-NF), the requirement of invariant and explicit distribution confines the flexibility of these branch length models.
\section{Methodology}\label{sec:method}
In this section, we present a more flexible family of branch length distributions for VBPI, featuring a hierarchical semi-implicit structure, which we call VBPI-SIBranch\@.
We begin by outlining the construction of semi-implicit branch length distributions with learnable topological features via powerful graph neural networks (GNNs) \citep{kipf2017GCN, Gilmer2017NeuralMP}.
These distributions exhibit natural permutation equivariance, making them well-suited for branch length approximation across various tree topologies.
Note that the branch lengths are defined upon the edges and thus do not naturally map across different tree topologies.
We then develop efficient surrogate objective functions, provide theoretical guarantees, and illustrate their application in the training process.

\begin{figure}
    \centering
    \includegraphics[width=\linewidth]{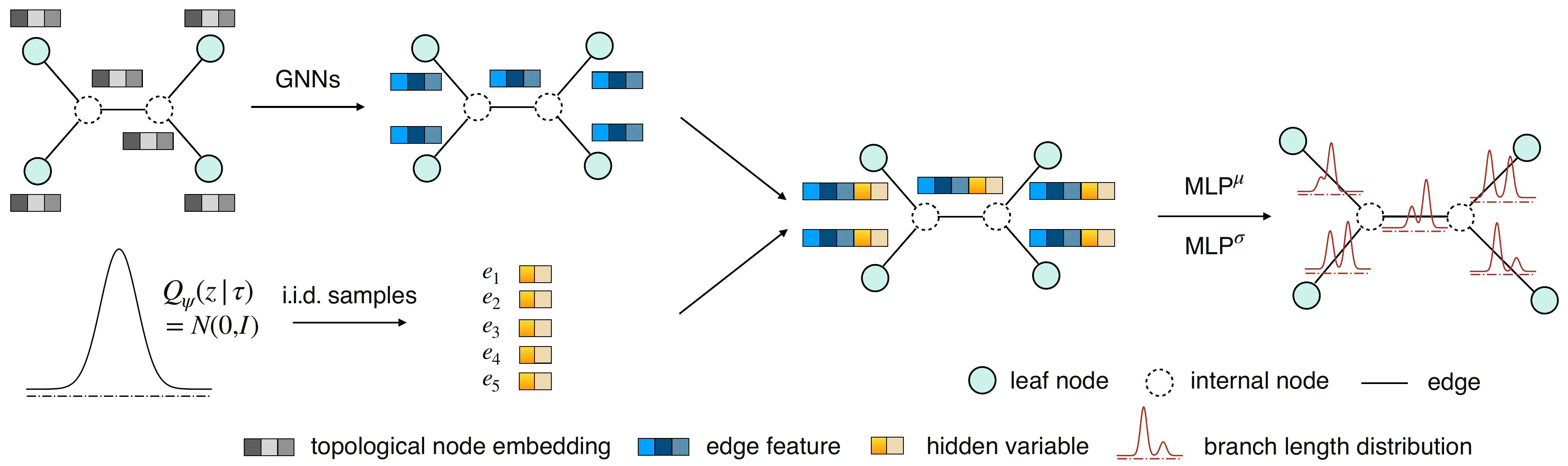}
    \caption{An overview of VBPI-SIBranch for a five-leaf phylogenetic tree.
    We begin with topological node embeddings \citep{Zhang2023learnable} (upper left) and apply GNNs to obtain the edge features.
    These features, joined together with the i.i.d. hidden variables, are finally fed into the $\mathrm{MLP}^\mu$ and $\mathrm{MLP}^\sigma$ to form the parameters of branch length distributions.
    }
    \label{fig:overview}
\end{figure}

\subsection{Semi-implicit Branch Length Distributions}
To improve the expressiveness of branch length models, 
we introduce the following semi-implicit hierarchical construction for branch length distributions
\begin{equation}\label{eq:si-dist}
\bm{q}\sim Q_{\bm{\psi}}(\bm{q}|\tau,\bm{z}),\ \bm{z}\sim Q_{\bm{\psi}}(\bm{z}|\tau),
\end{equation}
where $\bm{z}$ is a hidden variable with prior distribution $Q_{\bm{\psi}}(\bm{z}|\tau)$ (i.e., the mixing distribution) conditioned on the tree topology $\tau$, and $Q_{\bm{\psi}}(\bm{q}|\tau,\bm{z})$ is the conditional branch length distribution.
Both $Q_{\bm{\psi}}(\bm{z}|\tau)$ and $Q_{\bm{\psi}}(\bm{q}|\tau,\bm{z})$ are assumed to be reparameterizable, while $Q_{\bm{\psi}}(\bm{z}|\tau)$ is generally implicit and $Q_{\bm{\psi}}(\bm{q}|\tau,\bm{z})$ is required to be explicit.
Integrating out the hidden variable $\bm{z}$, we have the marginal variational distribution of branch lengths
\begin{equation}\label{eq:si-marginal-dist}
Q_{\bm{\psi}}(\bm{q}|\tau) =  \int Q_{\bm{\psi}}(\bm{q}|\tau,\bm{z}) Q_{\bm{\psi}}(\bm{z}|\tau) \mathrm{d}\bm{z}.
\end{equation}
This augmented hidden variable introduces additional flexibility to the modeling of branch lengths.
Note that equation (\ref{eq:si-marginal-dist}) degenerates to the explicit branch length distribution in vanilla VBPI when the mixing distribution $\bm{z}\sim Q_{\bm{\psi}}(\bm{z}|\tau)$ collapses to a Dirac measure.

For a given tree topology $\tau$, the distribution of its associated branch lengths $\bm{q}$ should not depend on the edge orderings on $E(\tau)$. 
This naturally requires the branch length model to be permutation invariant (Definition \ref{def:permutation-invariance}).
In what follows, we show that the semi-implicit hierarchical construction \eqref{eq:si-dist} allows permutation invariant construction of the marginal branch length distributions (Proposition \ref{prop:invariant}).

\begin{definition}[Permutation Invariance]\label{def:permutation-invariance}
For a tree topology $\tau$, let $\pi: E(\tau)\to E(\tau)$ be a specific permutation function on the edges of $\tau$ 
and $\bm{q}_\pi=[q_{\pi(e)}]_{e\in E(\tau)}$.
The branch length distribution $Q_{\bm{\psi}}(\bm{q}|\tau)$ is said to be permutation invariant, if for any permutation function $\pi$, we have $
Q_{\bm{\psi}}(\bm{q}_{\pi}|\tau) = Q_{\bm{\psi}}(\bm{q}|\tau).$
\end{definition}

\begin{proposition}\label{prop:invariant}
Suppose $\bm{z}=[\bm{z}_e]_{e\in E(\tau)}$ and $\bm{z}_{\pi}=[\bm{z}_{\pi(e)}]_{e\in E(\tau)}$. If $Q_{\bm{\psi}}(\bm{q}|\tau,\bm{z})$ and $Q_{\bm{\psi}}(\bm{z}|\tau)$ in \eqref{eq:si-dist} are permutation invariant, i.e., $Q_{\bm{\psi}}(\bm{q}_{\pi}|\tau,\bm{z}_\pi)=Q_{\bm{\psi}}(\bm{q}|\tau,\bm{z})$, $Q_{\bm{\psi}}(\bm{z}_\pi|\tau)=Q_{\bm{\psi}}(\bm{z}|\tau)$, then the marginal branch length distribution $Q_{\bm{\psi}}(\bm{q}|\tau)$ is also permutation invariant.
\end{proposition}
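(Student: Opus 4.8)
The plan is to work directly from the mixture representation \eqref{eq:si-marginal-dist} and reduce the invariance of the marginal to the two assumed invariances by a change of variables in the latent integral. First I would fix an arbitrary permutation $\pi$ and write
\begin{equation*}
Q_{\bm{\psi}}(\bm{q}_{\pi}|\tau) = \int Q_{\bm{\psi}}(\bm{q}_{\pi}|\tau,\bm{z})\, Q_{\bm{\psi}}(\bm{z}|\tau)\,\mathrm{d}\bm{z},
\end{equation*}
so that the goal becomes transforming the right-hand side back into $\int Q_{\bm{\psi}}(\bm{q}|\tau,\bm{z})\,Q_{\bm{\psi}}(\bm{z}|\tau)\,\mathrm{d}\bm{z} = Q_{\bm{\psi}}(\bm{q}|\tau)$.

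The key step is to relabel the latent coordinates by the same permutation that acts on $\bm{q}$, i.e.\ substitute $\bm{z} = \bm{w}_{\pi}$ with $\bm{w}_{\pi} = [\bm{w}_{\pi(e)}]_{e\in E(\tau)}$. Because this map merely permutes the coordinate blocks of the latent vector, it is a bijection of the latent space whose Jacobian is a permutation matrix with determinant $\pm 1$; hence $\mathrm{d}\bm{z} = \mathrm{d}\bm{w}$ and the domain of integration is unchanged. This yields
\begin{equation*}
Q_{\bm{\psi}}(\bm{q}_{\pi}|\tau) = \int Q_{\bm{\psi}}(\bm{q}_{\pi}|\tau,\bm{w}_{\pi})\, Q_{\bm{\psi}}(\bm{w}_{\pi}|\tau)\,\mathrm{d}\bm{w}.
\end{equation*}

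At this point both hypotheses apply to the integrand directly: the assumed invariance of the conditional gives $Q_{\bm{\psi}}(\bm{q}_{\pi}|\tau,\bm{w}_{\pi}) = Q_{\bm{\psi}}(\bm{q}|\tau,\bm{w})$, and the assumed invariance of the mixing distribution gives $Q_{\bm{\psi}}(\bm{w}_{\pi}|\tau) = Q_{\bm{\psi}}(\bm{w}|\tau)$. Substituting these two identities collapses the integrand to $Q_{\bm{\psi}}(\bm{q}|\tau,\bm{w})\,Q_{\bm{\psi}}(\bm{w}|\tau)$, and the resulting integral is exactly $Q_{\bm{\psi}}(\bm{q}|\tau)$ by \eqref{eq:si-marginal-dist}, which establishes $Q_{\bm{\psi}}(\bm{q}_{\pi}|\tau) = Q_{\bm{\psi}}(\bm{q}|\tau)$ for every $\pi$ and completes the argument.

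I expect the only genuine subtlety to be the change-of-variables bookkeeping, rather than any analytic difficulty: one must confirm that permuting the latent coordinate blocks is measure preserving and keep the direction of $\pi$ consistent, so that the \emph{same} $\pi$ acts on both $\bm{q}$ and $\bm{z}$ when the two hypotheses are invoked. Everything else is a direct substitution, and no integrability concern arises beyond the existence of the marginal already implicit in \eqref{eq:si-marginal-dist}.
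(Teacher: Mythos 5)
Your proposal is correct and follows essentially the same route as the paper's proof: both write the marginal at $\bm{q}_\pi$ via the mixture representation, perform the permutation change of variables in the latent integral (using that the permutation matrix has unit Jacobian determinant, so the latent measure is preserved), and then apply the two assumed invariances to recover $Q_{\bm{\psi}}(\bm{q}|\tau)$. Your explicit substitution $\bm{z}=\bm{w}_\pi$ is just a slightly more careful rendering of the same bookkeeping the paper does with $\mathrm{d}\bm{z}_\pi$ and $|\mathrm{det}(\bm{L}_\pi)|=1$.
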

\begin{proof}
Let $\bm{L}_\pi$ be the permutation matrix corresponding to $\pi$ with $|\mathrm{det}(\bm{L}_\pi)|=1$. 
By the permutation invariance of $Q_{\bm{\psi}}(\bm{q}|\tau,\bm{z})$, we know that $Q_{\bm{\psi}}(\bm{q}_\pi|\tau,\bm{z}_\pi)=Q_{\bm{\psi}}(\bm{q}|\tau,\bm{z})$. 
This, together with the permutation invariance of $Q_{\bm{\psi}}(\bm{z}_\pi|\tau)$, yields
\[
Q_{\bm{\psi}}(\bm{q}_\pi|\tau) =  \int Q_{\bm{\psi}}(\bm{q}_\pi|\tau,\bm{z}_\pi) Q_{\bm{\psi}}(\bm{z}_\pi|\tau) \mathrm{d}\bm{z}_\pi = \int Q_{\bm{\psi}}(\bm{q}|\tau,\bm{z}) Q_{\bm{\psi}}(\bm{z}|\tau)|\mathrm{det}(\bm{L}_\pi)| \mathrm{d}\bm{z}=Q_{\bm{\psi}}(\bm{q}|\tau),
\]
which implies that $Q_{\bm{\psi}}(\bm{q}|\tau)$ is a permutation invariant distribution.
\end{proof}

\subsection{Graph Neural Networks for Semi-implicit Branch Length Distributions}\label{sec:embedding}
Both the invariant conditional branch length distribution $Q_{\bm{\psi}}(\bm{q}|\tau,\bm{z})$ and the mixing distribution $Q_{\bm{\psi}}(\bm{z}|\tau)$ can be parametrized by GNNs.
We will first introduce the topological node embeddings and then give a concrete example for constructing semi-implicit branch length distributions with GNNs.
\paragraph{Topological Node Embeddings}
\cite{Zhang2023learnable} introduces topological node embedding for phylogenetic trees that allows integration of deep learning methods for structural representation learning of phylogenetic trees for downstream tasks \citep{xie2023artree}.
For a tree topology $\tau$, the set of topological node embeddings is defined as $\bm{f}(\tau)=\{\bm{f}_u\in \mathbb{R}^N; u\in V(\tau)\}$ which assigns an embedding vector for each node.
To obtain the topological node embeddings, we first assign one-hot embedding vectors to the leaf nodes and then 
compute the embedding vectors for internal nodes by minimizing the Dirichlet energy
\begin{equation}
\ell(\bm{f},\tau) := \sum_{(u,v)\in E(\tau)}||\bm{f}_u-\bm{f}_v||^2
\end{equation}
that can be analytically solved by a linear-time two-pass algorithm \citep{Zhang2023learnable}. The following theorem reveals the representation power of topological node embeddings.

\begin{theorem}[Identifiability; \citet{Zhang2023learnable}]
Let $V^o(\tau)$ be the set of internal nodes of $\tau$ and $\bm{f}^o(\tau)=\{\bm{f}_u; u\in V^o(\tau)\}$ denote the topological node embeddings of $V^o(\tau)$. 
For two tree topologies $\tau_1$ and $\tau_2$, $\tau_1=\tau_2$ if and only if $\bm{f}^o(\tau_1)=\bm{f}^o(\tau_2)$.
\end{theorem}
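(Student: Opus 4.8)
The plan is to prove the two directions separately, with the reverse (``only if'') direction carrying essentially all of the difficulty. First I would pin down the object itself: minimizing the Dirichlet energy $\ell(\bm{f},\tau)$ over the interior values with the leaf values fixed to the one-hot vectors $e_1,\dots,e_N$ is a strictly convex quadratic program, since its Hessian is the graph Laplacian of $\tau$ restricted to $V^o(\tau)$, which is positive definite for a connected graph with nonempty boundary. Hence $\bm{f}^o(\tau)$ is the \emph{unique} minimizer, characterized by the first-order (harmonic) condition $\bm{f}_u=\frac{1}{\deg(u)}\sum_{v\sim u}\bm{f}_v$ for every $u\in V^o(\tau)$. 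This gives the easy direction immediately: if $\tau_1=\tau_2$, the two linear systems coincide and so do their unique solutions. Two facts I would record for later use: summing coordinates shows $\sum_i(\bm{f}_u)_i=1$, and the maximum principle gives $(\bm{f}_u)_i>0$ for all $i$; thus each $\bm{f}_u$ is the barycentric-coordinate vector of $u$ with respect to the affinely independent leaf points $e_1,\dots,e_N$, equivalently the absorption distribution over leaves of the simple random walk on $\tau$ started at $u$ with the leaves made absorbing.

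For the reverse direction my strategy is to reconstruct $\tau$ from the unordered set $\bm{f}^o(\tau)$ together with the fixed leaf embeddings, and to show that the reconstruction depends on $\bm{f}^o(\tau)$ alone. The engine is a first-passage identity: for any two internal nodes $u,w$, a leaf $i$ whose route to both branches off the $u$--$w$ path at a node $x$ satisfies $(\bm{f}_u)_i=P_u(\text{hit }x)\,(\bm{f}_x)_i$ and $(\bm{f}_w)_i=P_w(\text{hit }x)\,(\bm{f}_x)_i$ by the strong Markov property (probabilities taken before absorption). Hence the ratio $(\bm{f}_u)_i/(\bm{f}_w)_i$ is constant across all leaves sharing the same branch-off node $x$, and it is strictly monotone in the position of $x$ along the path, since each edge contributes a factor strictly in $(0,1)$. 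Consequently $u$ and $w$ are adjacent in $\tau$ if and only if the vector $\big[(\bm{f}_u)_i/(\bm{f}_w)_i\big]_{i=1}^N$ takes exactly two distinct values; when they are, the two value-classes are precisely the two sides of the split induced by the edge $(u,w)$. The ``no false positives'' part uses that every internal node lying strictly inside the $u$--$w$ path carries at least one pendant leaf direction (true for the unrooted binary trees that are the paper's focus), which forces at least three distinct ratio values whenever $u,w$ are non-adjacent.

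Given the lemma, I would iterate over all pairs of elements of $\bm{f}^o(\tau)$, flag the adjacent pairs by the two-value test, and read off the corresponding leaf bipartitions to obtain exactly the set of nontrivial splits of $\tau$. The splits-equivalence theorem (Buneman) then recovers $\tau$ uniquely from its split system. Since every step of this procedure is a function of $\bm{f}^o(\tau)$ and the fixed leaf labels only, $\bm{f}^o(\tau_1)=\bm{f}^o(\tau_2)$ yields identical split systems and hence $\tau_1=\tau_2$, completing the nontrivial direction. A byproduct worth checking separately is that distinct nodes have distinct embeddings: if $\bm{f}_u=\bm{f}_w$ the first-passage factors would both equal $1$, contradicting the presence of a leaf strictly on one side of the edge.

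The main obstacle I anticipate is the ratio lemma, and specifically two points that need care: justifying the first-passage factorization rigorously (via the strong Markov property for the absorbing walk, or equivalently a harmonic-extension/maximum-principle argument stated purely in terms of $\ell$), and establishing strict distinctness of the per-group ratios across branch-off positions, which is what makes the two-value criterion both necessary and sufficient for adjacency. Everything else---the convex-analytic well-definedness, the barycentric interpretation, and the passage from splits back to the tree---is standard once the lemma is in hand.
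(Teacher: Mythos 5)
You should first be aware that the paper you are being compared against contains no proof of this statement at all: the theorem is stated with the attribution ``Identifiability; \citet{Zhang2023learnable}'' and imported from that reference, and the appendix of this paper proves only Theorems \ref{thm:sivi} and \ref{thm:iwhvi} (the lower-bound results). So there is no in-paper argument to match; your proposal has to be judged as an independent proof, and on that basis it is correct in all essential respects. The forward direction via strict convexity of the Dirichlet energy (the grounded Laplacian being positive definite on a connected graph with fixed boundary) is right, and the reading of the harmonic extension as the absorption distribution of the leaf-absorbed random walk is the standard, correct one. Your key ratio lemma also checks out quantitatively: writing the $u$--$w$ path as $u=x_0,x_1,\dots,x_m=w$, the ratio attached to branch-off node $x_j$ is $r_j=\prod_{l<j}P_{x_l}(\mathrm{hit}\ x_{l+1})\,\big/\prod_{l\ge j}P_{x_{l+1}}(\mathrm{hit}\ x_l)$, so $r_{j+1}/r_j=P_{x_j}(\mathrm{hit}\ x_{j+1})\,P_{x_{j+1}}(\mathrm{hit}\ x_j)<1$; strictness holds because in a binary unrooted topology every node on the path, endpoints included, has at least one off-path subtree containing a leaf, which simultaneously gives strict monotonicity, nonemptiness of every branch-off class, and hence the validity of your ``exactly two distinct values iff adjacent'' test in both directions. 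Distinctness of internal embeddings follows as you say, so the unordered set $\bm{f}^o(\tau)$ determines all pairwise tests, the tests determine the nontrivial splits, and the splits-equivalence theorem finishes the reconstruction.

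Two points you should write out explicitly when turning the sketch into a proof. First, the first-passage factorization $(\bm{f}_u)_i=P_u(\mathrm{hit}\ x)\,(\bm{f}_x)_i$ rests on the fact that $x$ \emph{separates} $u$ from leaf $i$, so that absorption at $i$ forces the walk through $x$; this is exactly where the tree structure (as opposed to a general graph) enters, and it deserves a sentence rather than an appeal to ``the strong Markov property'' alone. Second, the statement implicitly assumes $\tau_1$ and $\tau_2$ share the same labelled taxon set, so that the one-hot boundary data agree; your reconstruction is a function of the set of internal embeddings \emph{and} the fixed leaf basis, and the theorem is false without that shared boundary. A final remark in your favor: your argument is constructive, yielding an explicit algorithm that recovers $\tau$ from $\bm{f}^o(\tau)$ by pairwise ratio tests, which is more than the bare identifiability claim requires.
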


\paragraph{Learnable Node Features}
To form learnable node features (initialized as the topological node features $\{\bm{f}_u^{(0)};u\in V(\tau)\}$) that encode the topological information of $\tau$, we utilize GNNs with message passing steps where the node features are updated by aggregating the information from their neighborhoods in a convolutional manner \citep{Gilmer2017NeuralMP}.
Concretely, the $l$-th message passing step is implemented as
\[
\begin{array}{rcl}
\bm{m}^{(l+1)}_u &=& \mathrm{AGG}^{(l)}\left(\left\{\bm{f}_v^{(l)}; v\in\mathcal{N}(u)\right\}\right),\\
\bm{f}_u^{(l+1)} & = & \mathrm{UPDATE}^{(l)}\left(\bm{f}_u^{(l)}, \bm{m}^{(l+1)}_u\right),
\end{array}
\]
where $\mathrm{AGG}^{(l)}$ and $\mathrm{UPDATE}^{(l)}$ are the aggregation function and update function in the $l$-th step parametrized by neural networks.
After $L$ message passing steps, $\{\bm{f}^{(L)}_u; u\in V(\tau)\}$ are fed into a multi-layer perceptron (MLP), i.e.,
\[
\bm{h}_u = \mathrm{MLP}\left(\bm{f}^{(L)}_u\right),
\]
which outputs the learnable node features $\bm{h}(\tau)=\{\bm{h}_u; u\in V(\tau)\}$.

\paragraph{Semi-implicit Construction}
Let $g$ be a permutation invariant function (e.g., $\mathrm{sum}$).
We first transform the learnable node features into edge features $\{\bm{h}_e; e\in E(\tau)\}$ with $\bm{h}_e=g(\{\bm{h}_u, \bm{h}_v\})$ where $u$ and $v$ are the two neighboring nodes of $e$.
Let $\bm{z}_e$ be the corresponding hidden variable for edge $e$, and the $\bm{z}=[\bm{z}_e]_{e\in E(\tau)}$ follows the mixing distribution $Q_{\bm{\psi}}(\bm{z}|\tau)$. 
These features are then concatenated to form the mixing edge features $\{\bar{\bm{h}}_e = \bm{h}_e \| \bm{z}_e; e\in E(\tau)\}$,
where $\|$ means vector concatenation along the edge feature axis.
The conditional branch length distribution $Q_{\bm{\psi}}(\bm{q}|\tau,\bm{z})$ in equation (\ref{eq:si-dist}) takes the form (i.e., a diagonal lognormal distribution)
\[
Q_{\bm{\psi}}(\bm{q}|\tau,\bm{z}) = \prod_{e\in E(\tau)}p^{\mathrm{Lognormal}}(q_e|\mu(e,\tau,\bm{z}), \sigma(e,\tau,\bm{z})),
\] 
where the mean and standard deviation parameters are parametrized using MLPs as follows:
\[
\mu(e,\tau,\bm{z}) = \mathrm{MLP}^{\mu}\left(\bar{\bm{h}}_e\right),\quad \sigma(e,\tau,\bm{z}) = \mathrm{MLP}^{\sigma}\left(\bar{\bm{h}}_e\right),
\]
and $\bm{\psi}$ are all the learnable parameters in this conditional branch length distribution construction.
Although the mixing distribution $Q_{\bm{\psi}}(\bm{z}|\tau)$ can also be parameterized using learnable node features of $\tau$,
here we use the simple standard Gaussian distribution for $Q_{\bm{\psi}}(\bm{z}|\tau)$ which ignores the dependency on $\tau$ for simplicity.

\subsection{Multi-sample Semi-implicit Lower Bound for VBPI-SIBranch}
Due to the semi-implicit construction of branch length distributions, the MLB  $L^K(\bm{\phi},\bm{\psi})$ in equation (\ref{eq-lower-bound}) is no longer tractable.
However, we can use a multi-sample extension of the SILB in \citet{yin2018sivi} for training. 
Letting $Q_{\bm{\phi}}(\tau)$ be the variational distribution over tree topologies, the \textit{multi-sample semi-implicit lower bound} (MSILB) is defined as 
\begin{equation}\label{eq:sivi_vbpi_lower_bound}
\begin{split}
    L^{K,J}(\bm{\phi},\bm{\psi}) = 
    \mathbb{E}_{\prod_{k=1}^K Q_{\bm{\phi},\bm{\psi}}(\tau^k,\bm{q}^k, \bm{z}^{k,0})} \mathbb{E}_{\prod_{k=1}^K Q_{\bm{\psi}}(\bm{z}^{k,1:J}|\tau^k)}
    \log\left(\frac{1}{K}\sum_{k=1}^K\frac{P(\bm{Y}|\tau^k,\bm{q}^k)P(\tau^k,\bm{q}^k)}{Q_{\bm{\phi}}(\tau^k)\frac{1}{J+1}\sum_{j=0}^JQ_{\bm{\psi}}(\bm{q}^k|\tau^k,\bm{z}^{k,j})}\right),
\end{split}
\end{equation}
where $Q_{\bm{\phi},\bm{\psi}}(\tau,\bm{q},\bm{z})=Q_{\bm{\psi}}(\bm{q}|\tau,\bm{z}) Q_{\bm{\psi}}(\bm{z}|\tau)Q_{\bm{\phi}}(\tau)$ and $Q_{\bm{\psi}}(\bm{z}^{k,1:J}|\tau^k)=\prod_{j=1}^J Q_{\bm{\psi}}(\bm{z}^{k,j}|\tau^k)$.
In fact, the above MSILB is a lower bound of the MLB $L^K(\bm{\phi},\bm{\psi})$, as proved in Theorem \ref{thm:sivi}.

\begin{theorem}\label{thm:sivi}
The MSILB $L^{K,J}(\bm{\phi},\bm{\psi})$ in equation (\ref{eq:sivi_vbpi_lower_bound}) is a lower bound of $L^K(\bm{\phi},\bm{\psi})$ in equation (\ref{eq-lower-bound}), and is an increasing function of $J$, i.e.,
$
L^{K,J}(\bm{\phi},\bm{\psi}) \leq L^{K,J+1}(\bm{\phi},\bm{\psi}) \leq L^{K}(\bm{\phi},\bm{\psi}),\; \forall J.
$
Moreover, it is asymptotically unbiased, i.e.,
$\lim_{J\to\infty}L^{K,J}(\bm{\phi},\bm{\psi})=L^K(\bm{\phi},\bm{\psi})$.
\end{theorem}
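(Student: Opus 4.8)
The plan is to treat, for each particle $k$, the quantity $\hat{Q}_J^k := \frac{1}{J+1}\sum_{j=0}^J Q_{\bm{\psi}}(\bm{q}^k|\tau^k,\bm{z}^{k,j})$ as a Monte Carlo estimator of the intractable marginal $Q_{\bm{\psi}}(\bm{q}^k|\tau^k)$, and to control how replacing the latter by $\hat{Q}_J^k$ in the denominator of the bound changes the objective. Writing $w^k = P(\bm{Y}|\tau^k,\bm{q}^k)P(\tau^k,\bm{q}^k)/Q_{\bm{\phi}}(\tau^k)$, which does not depend on $J$, both the MSILB and the MLB are expectations of $\log\left(\frac{1}{K}\sum_k w^k/d^k\right)$, with $d^k = \hat{Q}_J^k$ for the MSILB and $d^k = Q_{\bm{\psi}}(\bm{q}^k|\tau^k)$ for the MLB.

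The first step is a symmetrization. In the sampling law of the MSILB the latent $\bm{z}^{k,0}$ is special, since $\bm{q}^k$ is drawn from $Q_{\bm{\psi}}(\cdot|\tau^k,\bm{z}^{k,0})$ while $\bm{z}^{k,1:J}$ are auxiliary draws from $Q_{\bm{\psi}}(\cdot|\tau^k)$; this breaks the exchangeability that the usual multi-sample arguments require. However, because $\hat{Q}_J^k$ is symmetric in $\bm{z}^{k,0:J}$ and the prior factors $\prod_j Q_{\bm{\psi}}(\bm{z}^{k,j}|\tau^k)$ are symmetric, relabeling $\bm{z}^{k,0}\leftrightarrow\bm{z}^{k,i}$ and averaging over $i$ lets me rewrite, for each $k$ independently, the generating factor $Q_{\bm{\psi}}(\bm{q}^k|\tau^k,\bm{z}^{k,0})$ as the mixture $\hat{Q}_J^k$. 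This gives the equivalent representation $L^{K,J} = \mathbb{E}[\mathcal{F}(\hat{Q}_J^1,\ldots,\hat{Q}_J^K)]$ in which the $\bm{z}^{k,0:J}$ are now i.i.d.\ draws from $Q_{\bm{\psi}}(\cdot|\tau^k)$, so the $\hat{Q}_J^k$ are unbiased for $Q_{\bm{\psi}}(\cdot|\tau^k)$ and independent across $k$, and $\mathcal{F}(\{d^k\}) := \mathbb{E}_{\bm{q}^k\sim d^k}\left[\log\frac{1}{K}\sum_k w^k/d^k\right]$ is a functional that \emph{bakes the sampling law $\bm{q}^k\sim d^k$ into its definition}. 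Correspondingly $L^K = \mathcal{F}(\{Q_{\bm{\psi}}(\cdot|\tau^k)\})$.

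The crux is then to show that $\mathcal{F}$ is concave in each density argument $d^k$ \emph{separately}. Fixing the other coordinates and integrating out $\bm{q}^{k'}$ for $k'\neq k$, the functional reduces to $\int h(d^k(\bm{q}^k),\bm{q}^k)\,\mathrm{d}\bm{q}^k$, where $h(t) = \mathbb{E}_R[t\log(w^k/(Kt)+R)]$ and $R\geq 0$ collects the contributions of the other particles and has a law independent of $t$. A short computation gives $h''(t) = \mathbb{E}_R\left[\frac{R^2 t + 2Rw^k/K}{(Rt+w^k/K)^2}\right] - \frac{1}{t}$, and since each summand is bounded by $1/t$ \emph{for every value of $R$} (equivalent to $0\leq (w^k/K)^2$), one obtains $h''\leq 0$, hence pointwise and therefore functional concavity. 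For $K=1$ this degenerates to concavity of differential entropy, as $\mathcal{F}(d) = \mathbb{E}_d[\log w] + H(d)$. I expect this concavity to be the main obstacle, precisely because the coupling of the particles inside the logarithm makes it a priori unclear that the multi-sample ($K>1$) functional retains the concavity that is transparent in the single-sample entropy case.

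With coordinate-wise concavity in hand, both inequalities follow from Jensen together with the independence of the $\hat{Q}_J^k$ across $k$. For $L^{K,J}\leq L^K$, I replace each random $\hat{Q}_J^k$ by its mean $Q_{\bm{\psi}}(\cdot|\tau^k)$ one coordinate at a time: conditioning on the other coordinates, concavity gives $\mathbb{E}_{\bm{z}^k}\mathcal{F}(\ldots,\hat{Q}_J^k,\ldots)\leq \mathcal{F}(\ldots,Q_{\bm{\psi}}(\cdot|\tau^k),\ldots)$, and iterating over $k$ yields the bound. For $L^{K,J}\leq L^{K,J+1}$, I use the leave-one-out identity $\hat{Q}_{J+1}^k = \frac{1}{J+2}\sum_{m=0}^{J+1}\hat{Q}_J^{k,(m)}$, in which each $\hat{Q}_J^{k,(m)}$ is an equally-distributed copy of $\hat{Q}_J^k$; coordinate-wise concavity gives $\mathcal{F}(\{\hat{Q}_{J+1}^k\})\geq \mathbb{E}_{\bm{m}}\mathcal{F}(\{\hat{Q}_J^{k,(m_k)}\})$, and taking expectations identifies the right-hand side with $L^{K,J}$ — here it is essential that $\mathcal{F}$ already carries the law $\bm{q}^k\sim d^k$, so no mismatch in the distribution of $\bm{q}^k$ arises when $J$ changes. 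Finally, for asymptotic unbiasedness I return to the original parametrization, in which $\bm{q}^k$ is fixed as $J$ grows: the strong law of large numbers gives $\hat{Q}_J^k\to Q_{\bm{\psi}}(\bm{q}^k|\tau^k)$ almost surely, and a dominated-convergence argument then upgrades this to $\lim_{J\to\infty}L^{K,J} = L^K$, the only remaining technical point being an integrable envelope for $\log\hat{Q}_J^k$.
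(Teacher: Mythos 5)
Your proposal is correct, but it establishes the two inequalities by a genuinely different mechanism than the paper. Both arguments begin with the same symmetrization (the paper's Step 1): exchangeability of $\bm{z}^{k,0:J}$ lets one re-express $L^{K,J}$ so that $\bm{q}^k$ is drawn from the mixture $\hat{Q}^k_J$ with $\bm{z}^{k,0:J}$ i.i.d., after which $L^{K,J}$ and $L^{K}$ share a common reference distribution; and both exploit the same leave-one-out identity $\hat{Q}^k_{J+1} = \frac{1}{J+2}\sum_{m}\hat{Q}^{k,(m)}_J$ (the paper hides it in the uniformly distributed subsets $I_k$ of its Step 2). Where you diverge is the core step: the paper constructs the auxiliary normalized densities $f^{J}_{\bm{\phi},\bm{\psi}}$ and $h^{J}_{\bm{\phi},\bm{\psi}}$, verifies that they integrate to one, and writes each gap \emph{exactly} as a KL divergence, whose non-negativity gives the result; you instead prove that the functional $\mathcal{F}$ is concave in each density coordinate separately --- your pointwise computation is right, since it simplifies to $h''(t) = -\,\mathbb{E}_R\bigl[(w^k/K)^2/\bigl(t\,(Rt + w^k/K)^2\bigr)\bigr] \le 0$, and it correctly degenerates to concavity of entropy at $K=1$ --- and then apply Jensen coordinate-by-coordinate, using independence of the mixtures across particles. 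Your route buys a structural explanation (monotonicity is precisely coordinate-wise concavity of the multi-sample objective in its denominators, with the sampling law baked in) and avoids the paper's longest step, the verification that $f$ and $h$ are probability densities; the paper's route buys more quantitative information, namely closed-form expressions for the gaps $L^{K}-L^{K,J}$ and $L^{K,J+1}-L^{K,J}$ as KL divergences between explicit augmented distributions, which shows exactly when and how the bound tightens. On asymptotic unbiasedness you are, if anything, more careful than the paper: it invokes the strong law of large numbers alone, while you correctly note that one must also pass the limit through the expectation via dominated convergence, the only missing technicality being the construction of an integrable envelope for $\log\hat{Q}^k_J$.
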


\begin{algorithm}[t]
\caption{VBPI-SIBranch with MSILB}
\label{alg:sivbpi}
\KwIn{Observed sequences $\bm{Y}\in\Omega^{N\times S}$; initialized parameters $\bm{\phi},\bm{\psi}$.}
\While{not converged}{
$\tau^1,\ldots,\tau^K \leftarrow$ independent samples from the current tree topology approximating distribution $Q_{\bm{\phi}}(\tau)$\;
\For{$k=1,\ldots,K$}{
$\bm{z}^{k,0},\ldots,\bm{z}^{k,J}\leftarrow$ independent samples form the current mixing distribution $Q_{\bm{\psi}}(\bm{z}|\tau^k)$\;
$\bm{q}^{k}\leftarrow$ a sample from the current conditional branch length distribution $Q_{\bm{\psi}}(\bm{q}|\tau^k,\bm{z}^{k,0})$\;
Calculate the conditional probabilities $Q_{\bm{\psi}}(\bm{q}^k|\tau^k,\bm{z}^{k,j})$ for $0\leq j\leq J$\;
}
$\hat{\bm{g}}\leftarrow$ the estimate of the gradient $\nabla_{\bm{\phi},\bm{\psi}}L^{K,J}(\bm{\phi},\bm{\psi})$\;
$\bm{\phi},\bm{\psi} \leftarrow$ Updated parameters using gradient estimate $\hat{\bm{g}}$.
}
\end{algorithm}

The gradient of the surrogate function (\ref{eq:sivi_vbpi_lower_bound}) w.r.t.~$\bm\phi$ and $\bm\psi$ can be estimated by the VIMCO estimator and a reparameterization trick respectively.
Therefore, the MSILB in equation \eqref{eq:sivi_vbpi_lower_bound} can be maximized the same way as in \cite{Zhang2019VariationalBP}.
We summarize the VBPI-SIBranch approach with MSILB in Algorithm \ref{alg:sivbpi}.

\subsection{Multi-sample Importance Weighted Lower Bound for VBPI-SIBranch}
The MSILB $L^{K,J}(\bm{\phi},\bm{\psi})$ for VBPI-SIBranch relies on samples $\bm{z}^{k,1:J}$ from the mixing distribution $Q_{\bm{\psi}}(\bm{z}|\tau^k)$ to estimate the marginal densities of the branch length sample $\bm{q}^{k}$, for $1\leq k\leq K$.
However, these uninformed samples may miss the high posterior domain of $Q_{\bm{\phi},\bm{\psi}}(\bm{z}|\tau^k,\bm{q}^k)$ and become less efficient in high-dimensional settings, e.g., conditional branch length distributions $Q_{\bm{\psi}}(\bm{q}^k|\tau^k,\bm{z}^{k,j})$ for $1\leq j\leq J$ can be close to zero.
Similarly to \citet{Sobolev2019IWHVI}, one may employ an auxiliary reverse model $R_{\bm{\xi}}(\bm{z}|\tau,\bm{q})$ as an importance distribution that can adapt to the high posterior domain automatically.
More precisely, we consider the following \textit{multi-sample importance weighted lower bound} (MIWLB)
\begin{equation}\label{eq:iwhvi_vbpi_lower_bound}
\begin{aligned}
L^{K,J}_{w}(\bm{\phi},\bm{\psi},\bm{\xi}) = \mathbb{E}_{\prod_{k=1}^K Q_{\bm{\phi},\bm{\psi}}(\tau^k,\bm{q}^k, \bm{z}^{k,0})}&  \mathbb{E}_{\prod_{k=1}^K R_{\bm{\xi}}(\bm{z}^{k,1:J}|\tau^k,\bm{q}^k)}\\
& \log\left(\frac{1}{K}\sum_{k=1}^K\frac{P(\bm{Y}|\tau^k,\bm{q}^k)P(\tau^k,\bm{q}^k)}{Q_{\bm{\phi}}(\tau^k)\frac{1}{J+1}\sum_{j=0}^J \frac{Q_{\bm{\psi}}(\bm{q}^k|\tau^k,\bm{z}^{k,j})Q_{\bm{\psi}}(\bm{z}^{k,j}|\tau^k)}{R_{\bm{\xi}}(\bm{z}^{k,j}|\tau^k,\bm{q}^k)}}\right),
\end{aligned}
\end{equation}
where $Q_{\bm{\phi},\bm{\psi}}(\tau,\bm{q},\bm{z})=Q_{\bm{\psi}}(\bm{q}|\tau,\bm{z}) Q_{\bm{\psi}}(\bm{z}|\tau)Q_{\bm{\phi}}(\tau)$, $R_{\bm{\xi}}(\bm{z}^{k,1:J}|\tau^k,\bm{q}^k)=\prod_{j=1}^J R_{\bm{\xi}}(\bm{z}^{k,j}|\tau^k,\bm{q}^k)$, and ``$w$'' is the abbreviation of ``weighted''.
Note that the MIWLB $L^{K,J}_{w}(\bm{\phi},\bm{\psi},\bm{\xi})$ becomes MSILB $L^{K,J}(\bm{\phi},\bm{\psi})$ if we take the reverse model $R_{\bm{\xi}}(\bm{z}|\tau,\bm{q})=Q_{\bm{\psi}}(\bm{z}|\tau)$.
Moreover, $L^{K,J}_{w}(\bm{\phi},\bm{\psi},\bm{\xi})$ is also a lower bound of the MLB $L^K(\bm{\phi},\bm{\psi})$ in equation (\ref{eq-lower-bound}), as proved in Theorem \ref{thm:iwhvi}.

\begin{theorem}\label{thm:iwhvi}
The MIWLB $L^{K,J}_w(\bm{\phi},\bm{\psi},\bm{\xi})$ in equation (\ref{eq:iwhvi_vbpi_lower_bound}) is a lower bound of the MLB $L^K(\bm{\phi},\bm{\psi})$ in equation (\ref{eq-lower-bound}), and is an increasing function of $J$, i.e.,
$
L^{K,J}_w(\bm{\phi},\bm{\psi},\bm{\xi}) \leq L^{K,J+1}_w(\bm{\phi},\bm{\psi},\bm{\xi}) \leq L^{K}(\bm{\phi},\bm{\psi}), \; \forall J,
$
for arbitrary choices of $\bm{\xi}$.
Moreover, it is asymptotically unbiased, i.e., $\lim_{J\to\infty}L^{K,J}_w(\bm{\phi},\bm{\psi},\bm{\xi})=L^K(\bm{\phi},\bm{\psi})$.
\end{theorem}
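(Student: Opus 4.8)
The plan is to reduce the statement to the behaviour of the inner estimator
\[
\widehat{Q}^{k}_{J} \;=\; \frac{1}{J+1}\sum_{j=0}^{J}\frac{Q_{\bm{\psi}}(\bm{q}^{k}|\tau^{k},\bm{z}^{k,j})\,Q_{\bm{\psi}}(\bm{z}^{k,j}|\tau^{k})}{R_{\bm{\xi}}(\bm{z}^{k,j}|\tau^{k},\bm{q}^{k})},
\]
which targets the intractable marginal $\mu_k:=Q_{\bm{\psi}}(\bm{q}^{k}|\tau^{k})$ and sits in the denominator of equation (\ref{eq:iwhvi_vbpi_lower_bound}). The obstruction to a direct argument is the asymmetry between the $J+1$ latent draws: given $(\tau^k,\bm{q}^k)$, the sample $\bm{z}^{k,0}$ is a \emph{posterior} draw from $Q_{\bm{\psi}}(\bm{z}|\tau^k,\bm{q}^k)$, whereas $\bm{z}^{k,1:J}$ come from $R_{\bm{\xi}}$. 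My first step is to symmetrize this by a change of measure. Using Bayes' rule, $Q_{\bm{\psi}}(\bm{z}^{k,0}|\tau^k)Q_{\bm{\psi}}(\bm{q}^k|\tau^k,\bm{z}^{k,0})=\mu_k\,Q_{\bm{\psi}}(\bm{z}^{k,0}|\tau^k,\bm{q}^k)$ and writing the posterior as $(w^k_0/\mu_k)\,R_{\bm{\xi}}(\bm{z}^{k,0}|\tau^k,\bm{q}^k)$ with $w^k_0$ the $j=0$ summand, I convert the forward expectation into one under the \emph{symmetric} law in which $(\tau^k,\bm{q}^k)\sim Q_{\bm{\phi}}Q_{\bm{\psi}}$ and all of $\bm{z}^{k,0:J}$ are i.i.d.\ from $R_{\bm{\xi}}(\cdot|\tau^k,\bm{q}^k)$, at the cost of the prefactor $\prod_k w^k_0/\mu_k$. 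Since under this law the draws are exchangeable and the integrand depends on them only through $\widehat{Q}^k_J$, averaging the singled-out index over $0,\dots,J$ replaces $w^k_0/\mu_k$ by $U_k:=\widehat{Q}^k_J/\mu_k$, giving the identity
\[
L^{K,J}_{w}(\bm{\phi},\bm{\psi},\bm{\xi}) \;=\; \mathbb{E}_{\mathrm{sym}}\!\left[\prod_{k=1}^{K} U_k\,\log\!\left(\frac{1}{K}\sum_{k=1}^{K}\frac{v_k}{U_k}\right)\right],\qquad v_k:=\frac{P(\bm{Y}|\tau^k,\bm{q}^k)P(\tau^k,\bm{q}^k)}{Q_{\bm{\phi}}(\tau^k)\,\mu_k},
\]
where each $U_k$ is now an average of $J+1$ i.i.d.\ mean-one terms and $L^K(\bm{\phi},\bm{\psi})=\mathbb{E}_{\mathrm{sym}}[\log(\tfrac1K\sum_k v_k)]$.

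Given this reformulation, the bound $L^{K,J}_{w}\le L^K$ is quick: conditioning on $(\tau^{1:K},\bm{q}^{1:K})$, the variables $U_1,\dots,U_K$ are independent with mean one, so $\prod_k U_k$ is a mean-one density. Tilting by it and applying Jensen to the concave logarithm gives $\mathbb{E}[\prod_k U_k\log(\tfrac1K\sum_k v_k/U_k)]\le \log\mathbb{E}_{\mathrm{tilt}}[\tfrac1K\sum_k v_k/U_k]$, and independence yields $\mathbb{E}_{\mathrm{tilt}}[v_k/U_k]=v_k$, so the right-hand side is $\log(\tfrac1K\sum_k v_k)$; taking expectations over $(\tau,\bm{q})$ recovers $L^K$. (This inequality also follows a posteriori from the monotonicity and limit below.)

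The main work, and the step I expect to be the crux, is the monotonicity $L^{K,J}_{w}\le L^{K,J+1}_{w}$, since the usual IWAE leave-one-out identity breaks down in the presence of the distinguished index $0$ — precisely the defect the symmetrization repairs. I would raise $J$ one block at a time. Holding the samples of all blocks $k'\ne k$ fixed, the integrand is, as a function of $U_k$, equal to $C\,h(U_k)$ with $C=\prod_{k'\ne k}U_{k'}\ge 0$ and $h(u)=u\log(a/u+b)$ for $a=v_k/K\ge0$ and $b=\tfrac1K\sum_{k'\ne k}v_{k'}/U_{k'}\ge0$. A direct computation gives $h''(u)=-a^2/[u(a+bu)^2]\le0$, so $h$ is concave; since $U_k$ is an average of $J+1$ i.i.d.\ mean-one terms and $C\ge0$ does not depend on block $k$'s samples, the standard lemma that $M\mapsto\mathbb{E}[\varphi(\tfrac1M\sum_{i=1}^M X_i)]$ is nondecreasing for i.i.d.\ $X_i$ and concave $\varphi$ gives $\mathbb{E}[C\,h(U^{(J)}_k)]\le\mathbb{E}[C\,h(U^{(J+1)}_k)]$. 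Chaining this over $k=1,\dots,K$ proves the monotonicity for arbitrary $\bm{\xi}$.

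Asymptotic unbiasedness then follows from the strong law of large numbers: under the symmetric law $U^{(J)}_k\to1$ almost surely, so $\prod_k U_k\to1$ and $\log(\tfrac1K\sum_k v_k/U_k)\to\log(\tfrac1K\sum_k v_k)$, and a dominated-convergence (uniform-integrability) argument upgrades this to $\lim_{J\to\infty}L^{K,J}_{w}=L^K$. The decisive step throughout is the symmetrizing change of measure: it simultaneously legitimizes the leave-one-out comparison and collapses the general-$K$ claim onto the concavity of the single map $u\mapsto u\log(a/u+b)$. The whole argument specializes to Theorem \ref{thm:sivi} upon setting $R_{\bm{\xi}}=Q_{\bm{\psi}}(\bm{z}|\tau)$.
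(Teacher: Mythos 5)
Your proposal is correct, and its opening move coincides with the paper's: your ``symmetric law tilted by $\prod_k U_k$'' is exactly the paper's reference distribution $Q^{J}_{\bm{\phi},\bm{\psi},\bm{\xi}}$ (their $H^J_{\bm{\psi},\bm{\xi}}(\bm{q}^k,\bm{z}^{k,0:J}|\tau^k)$ is precisely $\mu_k U_k$), and your tilted-Jensen proof of $L^{K,J}_w\le L^{K}$ is the same inequality the paper packages as $\mathrm{KL}\bigl(Q^{J}_{\bm{\phi},\bm{\psi},\bm{\xi}}\,\|\,f^J_{\bm{\phi},\bm{\psi},\bm{\xi}}\bigr)\ge 0$ after verifying that $f^J_{\bm{\phi},\bm{\psi},\bm{\xi}}$ integrates to one. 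Where you genuinely diverge is the monotonicity step. The paper argues globally: it introduces a second reweighted function $h^J_{\bm{\phi},\bm{\psi},\bm{\xi}}$ on the $(J+2)$-sample space, proves it is a probability density by averaging over uniform size-$(J+1)$ subsets $I_k\subset\{0,\dots,J+1\}$, and identifies $L^{K,J+1}_w-L^{K,J}_w$ with $\mathrm{KL}\bigl(Q^{J+1}_{\bm{\phi},\bm{\psi},\bm{\xi}}\,\|\,h^J_{\bm{\phi},\bm{\psi},\bm{\xi}}\bigr)$; this requires re-expressing $L^{K,J}_w$ as an expectation under the $(J+1)$-level reference measure, a step that is only legitimate through the same subset symmetrization and is stated quite tersely in the paper. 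You instead interpolate one block at a time, freeze the other blocks into the nonnegative constants $C$ and $b$, and reduce the claim to concavity of the scalar map $h(u)=u\log(a/u+b)$ (your computation $h''(u)=-a^2/[u(a+bu)^2]$ is correct) combined with the standard fact that $\mathbb{E}\,\varphi(\bar X_M)$ is nondecreasing in $M$ for concave $\varphi$ and i.i.d.\ $X_i$. Since the proof of that lemma is itself the subset-averaging-plus-Jensen trick, the combinatorial core of the two arguments is shared; but your packaging is more modular, never needs $h^J_{\bm{\phi},\bm{\psi},\bm{\xi}}$ to be a density, makes the conditional independence across the $K$ blocks do explicit work, and specializes verbatim to Theorem \ref{thm:sivi} by setting $R_{\bm{\xi}}=Q_{\bm{\psi}}(\bm{z}|\tau)$. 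What the paper's route buys in exchange is an exact expression for each gap as a KL divergence, i.e.\ a quantitative certificate of looseness rather than just its sign. Two caveats apply equally to you and to the paper: the change of measure needs $R_{\bm{\xi}}(\bm{z}|\tau,\bm{q})>0$ wherever $Q_{\bm{\psi}}(\bm{z}|\tau)Q_{\bm{\psi}}(\bm{q}|\tau,\bm{z})>0$, and the limit $J\to\infty$ needs the uniform-integrability argument you name but do not carry out (the paper likewise only invokes the strong law of large numbers).
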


There are many choices for the reverse model $R_{\bm{\xi}}(\bm{z}|\tau,\bm{q})$, e.g., normal distributions and normalizing flows.
For simplicity, here we use a diagonal normal distribution
\[
R_{\bm{\xi}}(\bm{z}|\tau,\bm{q}) = \prod_{e\in E(\tau)} p^{\mathrm{Normal}}\left(\bm{z}_e|\bm{\mu}_R(e,\tau,\bm{q}), \bm{\sigma}_R(e,\tau,\bm{q})\right),
\]
where $\bm{\mu}_R(e,\tau,\bm{q})$ and $\bm{\sigma}_R(e,\tau,\bm{q})$ are the mean and standard deviation that are parameterized with MLPs using the edge features
\[
\bm{\mu}_R(e,\tau,\bm{q})=\mathrm{MLP}_R^{\mu}(\bm{h}_e\|q_e), \quad \bm{\sigma}_R(e,\tau,\bm{q})=\mathrm{MLP}_R^{\sigma}(\bm{h}_e\|q_e),
\]
where $\|$ means vector concatenation.
This way, the gradient of MIWLB $L^{K,J}_w(\bm{\phi},\bm{\psi},\bm{\xi})$ w.r.t. $\bm{\xi}$ can be estimated by the reparameterization trick.
We summarize the VBPI-SIBranch approach with MIWLB in Algorithm \ref{alg:iwvbpi}.
\begin{algorithm}[t]
\caption{VBPI-SIBranch with MIWLB}
\label{alg:iwvbpi}
\KwIn{Observed sequences $\bm{Y}\in\Omega^{N\times S}$; initialized parameters $\bm{\phi},\bm{\psi},\bm{\xi}$.}
\While{not converged}{
$\tau^1,\ldots,\tau^K \leftarrow$ independent samples from the current tree topology approximating distribution $Q_{\bm{\phi}}(\tau)$\;
\For{$k=1,\ldots,K$}{
$\bm{z}^{k,0} \leftarrow$ a sample form the current mixing distribution $Q_{\bm{\psi}}(\bm{z}|\tau^k)$\;
$\bm{q}^{k}\leftarrow$ a sample from the current conditional branch length distribution $Q_{\bm{\psi}}(\bm{q}|\tau^k,\bm{z}^{k,0})$\;
$\bm{z}^{k,1},\ldots,\bm{z}^{k,J} \leftarrow$ independent samples from the reverse distribution $R_{\bm{\xi}}(\bm{z}|\tau^k,\bm{q}^k)$\;
Calculate $Q_{\bm{\psi}}(\bm{q}^k|\tau^k,\bm{z}^{k,j}), Q_{\bm{\psi}}(\bm{z}^{k,j}|\tau^k)$ and $R_{\bm{\xi}}(\bm{z}^{k,j}|\tau^k,\bm{q}^k)$ for $0\leq j\leq J$\;
}
$\hat{\bm{g}}\leftarrow$ the estimate of the gradient $\nabla_{\bm{\phi},\bm{\psi},\bm{\xi}}L^{K,J}_w(\bm{\phi},\bm{\psi},\bm{\xi})$\;
$\bm{\phi},\bm{\psi},\bm{\xi} \leftarrow$ Updated parameters using gradient estimate $\hat{\bm{g}}$.
}
\end{algorithm}
\section{Experiments}\label{sec:experiments}

In this section, we test the effectiveness of VBPI-SIBranch on two common tasks for Bayesian phylogenetic inference: marginal likelihood estimation and posterior approximation.
Our code is available at \href{https://github.com/tyuxie/VBPI-SIBranch}{\texttt{https://github.com/tyuxie/VBPI-SIBranch}}.

\subsection{Experimental Setup}
\paragraph{Targets} 
The experiments are performed on eight benchmark data sets which we will call DS1-8. 
These data sets consist of nucleotide sequences from 27 to 64 eukaryote species with 378 to 2520 sites and 
are commonly used to benchmark the Bayesian phylogenetic inference task in previous works \citep{Zhang2019VariationalBP, Zhang2023learnable, mimori2023geophy, xie2023artree, Zhou2023PhyloGFN}.
We assume a uniform prior on the tree topologies, an i.i.d. exponential prior $\mathrm{Exp(10)}$ on branch lengths, and the simple Jukes \& Cantor (JC) substitution model \citep{jukes1969evolution}.

\paragraph{Variational Family} 
We use the same tree topology variational distribution $Q_{\bm{\phi}}(\tau)$, i.e., the simplest SBNs, for all branch length variational distributions.
The conditional probability supports for SBNs are gathered from 10 replicates of 10000 maximum likelihood bootstrap trees using UFBoot \citep{Minh2013UltrafastAF}, following \citet{Zhang2019VariationalBP}.
For the branch lengths, we compare our semi-implicit variational approximation to two baselines: VBPI \citep{Zhang2023learnable} and VBPI-NF \citep{Zhang2020ImprovedVB}.  
To obtain the learnable topological node features, both VBPI-SIBranch and VBPI use the same architecture for GNNs, which contain $L=2$ rounds of message passing steps with the aggregation function and update function following the edge convolution operator \citep{Wang2018DynamicGC}.
On all data sets, we set the dimension of learnable topological node features to 100 and the dimension of hidden variables to 50.
All the activation functions in MLPs are exponential linear units (ELUs) \citep{ELU}.
For VBPI-NF, we use the best RealNVP \citep{dinh2016realnvp} model with 10 layers to model the branch lengths, following \citet{Zhang2020ImprovedVB}.

\paragraph{Optimization} 
We set the number of particles $K=10$ for all the MLB, MSILB, and MIWLB. 
For both MSILB and MIWLB, we set the number of extra samples to be $J=50$.  
To accommodate the multimodality of phylogenetic posterior, we target the annealed phylogenetic posterior at the $i$-th iteration:
$P(\bm{Y},\tau,\bm{q}; \lambda_i) = P(\bm{Y}|\tau,\bm{q})^{\lambda_i}P(\tau,\bm{q})$,
where the annealing schedule $\lambda_i = \min(1, 0.001+i/100000)$ goes from $0.001$ to $1$ after 100000 iterations. 
The gradient estimates for the tree topology parameters are obtained by the VIMCO estimator \citep{Mnih2016VariationalIF}, and those for the branch length parameters and reverse model parameters are obtained by the reparameterization trick \citep{VAE}.
All these models are implemented in PyTorch \citep{Paszke2019PyTorchAI} and trained with the Adam optimizer \citep{ADAM}.
The learning rate is 0.001 for the tree topology model, 0.001 for the branch length model in VBPI and VBPI-SIBranch, and 0.0001 for the branch length model in VBPI-NF.
All results are collected after 400000 parameter updates.

\subsection{Marginal Likelihood Estimation}
\begin{figure}[t]
\centering
\includegraphics[width=\linewidth]{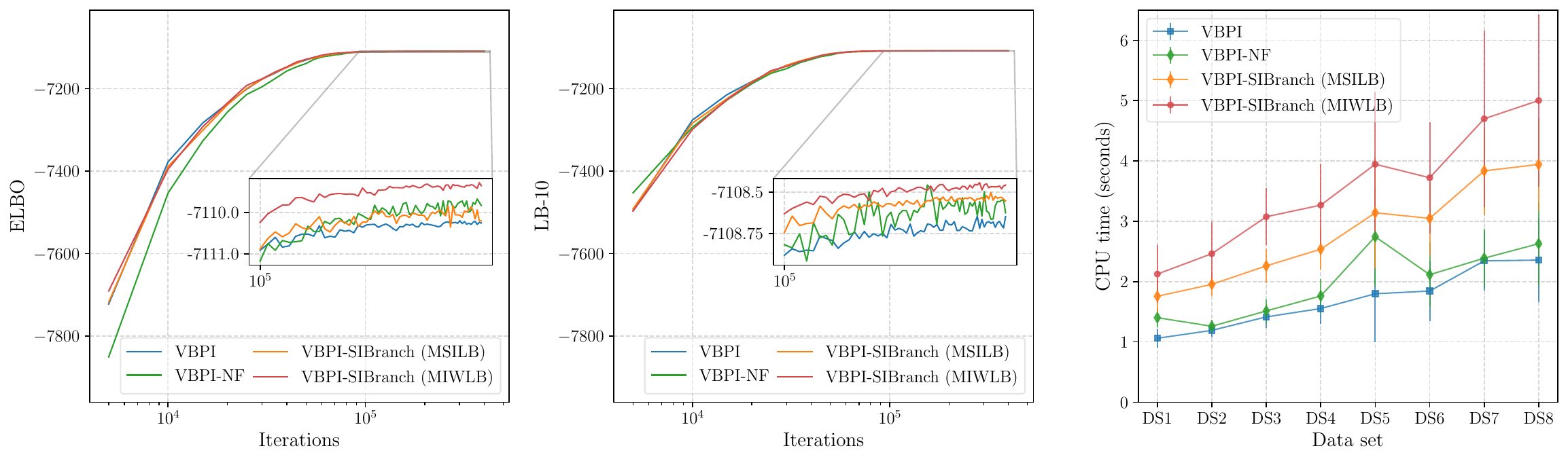}
\caption{Visualization of the training processes of different methods for VBPI.
\textbf{Left}: evidence lower bound (ELBO, estimated using $J=1000$ extra samples) as a function of iterations on DS1.
\textbf{Middle}: 10-sample lower bound (LB-10, estimated using $J=1000$ extra samples) as a function of iterations on DS1. 
\textbf{Right}: Time cost per 10 training iterations of different methods on a single core of Intel Xeon Platinum 9242 processor.
The results are averaged over 100 runs with the standard deviation as the error bar.
}
\label{fig:vbpi-training}
\end{figure}

\begin{table}[t]
\centering
\renewcommand\arraystretch{1.25}
% \linespread{1.25}
\setlength{\tabcolsep}{0.1cm}{}
\caption{Evidence lower bound (ELBO), 10-sample lower bound (LB-10), and marginal likelihood (ML) estimates of different methods across 8 benchmark data sets. 
The MSILB$^\ast$ refers to the MIWLB estimates of the variational approximation in VBPI-SIBranch (MSILB).
The ML estimates are obtained via importance sampling using 1000 samples.
For ELBO, LB-10, and ML, the results are averaged over 100, 100, and 1000 independent runs respectively with standard deviation in the brackets.
Results of stepping-stone (SS) are from \citet{Zhang2019VariationalBP}.
}
\label{tab:vbpi-lower-bound}
\vskip0.5em
\begin{footnotesize}
\resizebox{\linewidth}{!}{
\hspace{-0.2in}
\begin{tabular}{cccccccccc}
\cmidrule[1pt]{2-10}
 & Data set & DS1 & DS2 & DS3 & DS4 & DS5 & DS6 & DS7 & DS8 \\
 & \# Taxa & 27 &  29 &  36 & 41 & 50 &  50 & 59   &  64  \\
 & \# Sites & 1949  & 2520 & 1812  & 1137  & 378   & 1133  & 1824  &  1008  \\
\cmidrule[0.5pt]{2-10}
\multirow{5}{*}{\rot{ELBO}} & VBPI-SIBranch (MSILB) &-7110.00(0.30) & -26368.66(0.09) & -33736.07(0.07) & -13331.60(0.32)&-8217.31(0.20)&-6728.25(0.44)&-37334.41(0.34)&-8654.55(0.32)\\
& VBPI-SIBranch (MSILB$^\ast$) & -7109.99(0.28) &-26368.66(0.09) & -33736.06(0.07)&-13331.59(0.29)&-8217.29(0.21)&-6728.21(0.44)&-37334.39(0.34)&-8654.49(0.33)\\
&VBPI-SIBranch (MIWLB) &\textbf{-7109.34(0.13)}&-26368.56(0.09)&-33735.93(0.06)&\textbf{-13330.81(0.08)}&\textbf{-8215.95(0.09)}&\textbf{-6725.05(0.07)}&\textbf{-37333.22(0.09)}&\textbf{-8651.49(0.09)}\\
&VBPI &-7110.26(0.10) &-26368.84(0.09)& -33736.25(0.08)& -13331.80(0.10) &-8217.80(0.12)& -6728.57(0.16)& -37334.84(0.14)& -8655.01(0.14)\\
&VBPI-NF&-7109.83(0.10)&\textbf{-26368.44(0.19)}&\textbf{-33735.73(0.10)}&-13331.36(0.09)&-8217.59(0.10)&-6728.04(0.14)&-37333.85(0.09)&-8654.10(0.12)\\
 \cmidrule[0.5pt]{2-10}
\multirow{5}{*}{\rot{LB-10}} & VBPI-SIBranch (MSILB) & -7108.53(0.02) & -26367.82(0.02) & -33735.22(0.02) & -13330.12(0.02)&-8215.03(0.03)&-6724.81(0.03)&-37332.30(0.03)&-8651.26(0.04)\\
& VBPI-SIBranch (MSILB$^\ast$) & -7108.53(0.02) & -26367.82(0.01) &-33735.23(0.02) &-13330.11(0.02)&-8215.01(0.03)&-6724.77(0.03)&-37332.29(0.03)&-8651.22(0.04)\\
& VBPI-SIBranch (MIWLB) & \textbf{-7108.46(0.01)} & -26367.80(0.01)&-33735.20(0.01)& \textbf{-13330.02(0.01)}&\textbf{-8214.70(0.02)}&\textbf{-6724.26(0.01)}&\textbf{-37332.11(0.02)}&\textbf{-8650.49(0.02)}\\
   &VBPI &-7108.69(0.02) & -26367.87(0.02) & -33735.26(0.02) & -13330.29(0.02) & -8215.42(0.04) & -6725.33(0.04) & -37332.58(0.03) & -8651.78(0.04) \\
&VBPI-NF&-7108.58(0.02)&\textbf{-26367.75(0.01)}&\textbf{-33735.15(0.01)}&-13330.15(0.02)&-8215.30(0.03)&-6725.18(0.04)&-37332.29(0.03)&-8651.43(0.04)\\
  \cmidrule[0.5pt]{2-10}
\multirow{6}{*}{\rot{ML}} & VBPI-SIBranch (MSILB) & -7108.39(0.07) & -26367.71(0.06)& -33735.09(0.07) & -13329.91(0.10)&-8214.48(0.27)&-6724.21(0.25)&-37331.91(0.16)&-8650.44(0.35)\\
& VBPI-SIBranch (MSILB$^\ast$) & -7108.39(0.06) & -26367.71(0.05)& -33735.09(0.07) & -13329.91(0.09)&-8214.47(0.28)&-6724.20(0.23)&-37331.91(0.15)&-8650.43(0.33)\\
&VBPI-SIBranch (MIWLB) & \textbf{-7108.39(0.04)}&-26367.71(0.05)&-33735.09(0.07)&\textbf{-13329.91(0.06)}&\textbf{-8214.43(0.19)}&\textbf{-6724.16(0.06)}&\textbf{-37331.90(0.09)}&\textbf{-8650.33(0.11)}\\
&VBPI&-7108.41(0.15)&-26367.71(0.08)&-33735.09(0.09)&-13329.94(0.20)& -8214.62(0.40) & -6724.37(0.43)& -37331.97(0.28) & -8650.64(0.50)\\
&VBPI-NF&-7108.39(0.17)&\textbf{-26367.71(0.03)}&\textbf{-33735.09(0.05)}&-13329.92(0.15)&-8214.59(0.45)&-6724.33(0.42)&-37331.93(0.18)&-8650.55(0.39)\\
&SS& -7108.42(0.18)& -26367.57(0.48)& -33735.44(0.50)& -13330.06(0.54) &-8214.51(0.28)& -6724.07(0.86) &-37332.76(2.42) &-8649.88(1.75)\\
\cmidrule[1pt]{2-10}
\end{tabular}
}
\end{footnotesize}
\end{table}

We first investigate the performances of different methods for estimating the marginal likelihood and its lower bounds.
Figure \ref{fig:vbpi-training} depicts the training processes and the time costs for VBPI on DS1.
We see that the ELBO and the 10-sample lower bound (LB-10) as functions of iterations for VBPI-SIBranch align with those for VBPI and VBPI-NF.
Moreover, VBPI-SIBranch with MIWLB finally achieves the best lower bounds compared to the other three methods. 
In the right plot of Figure \ref{fig:vbpi-training}, we find that VBPI-SIBranch requires comparable time in training although multiple extra samples  ($J=50$) are needed, due to the efficient vectorized implementation.
Table \ref{tab:vbpi-lower-bound} shows the ELBO, LB-10, and marginal likelihood (ML) estimates of different methods on DS1-8.
It is worth noting that the comparison between VBPI-SIBranch (MSILB) and VBPI-SIBranch (MIWLB) might be unfair since they use different importance distributions for evaluation.
Therefore, we train a reverse model for the variational approximation in VBPI-SIBranch (MSILB) and calculate the lower bound estimates using MIWLB. Results in this setting are reported in VBPI-SIBranch (MSILB$^\ast$).
We see that VBPI-SIBranch consistently outperforms the VBPI baseline in terms of lower bounds and marginal likelihood estimates, indicating the effectiveness of semi-implicit branch length distributions.
Moreover, the superior performance of VBPI-SIBranch (MIWLB) over VBPI-SIBranch (MSILB) and VBPI-SIBranch (MSILB$^\ast$) suggests that employing a learnable importance distribution can be beneficial for the training of VBPI-SIBranch.

\subsection{Posterior Approximation}

\begin{figure}[t]
\centering
\includegraphics[width=\linewidth]{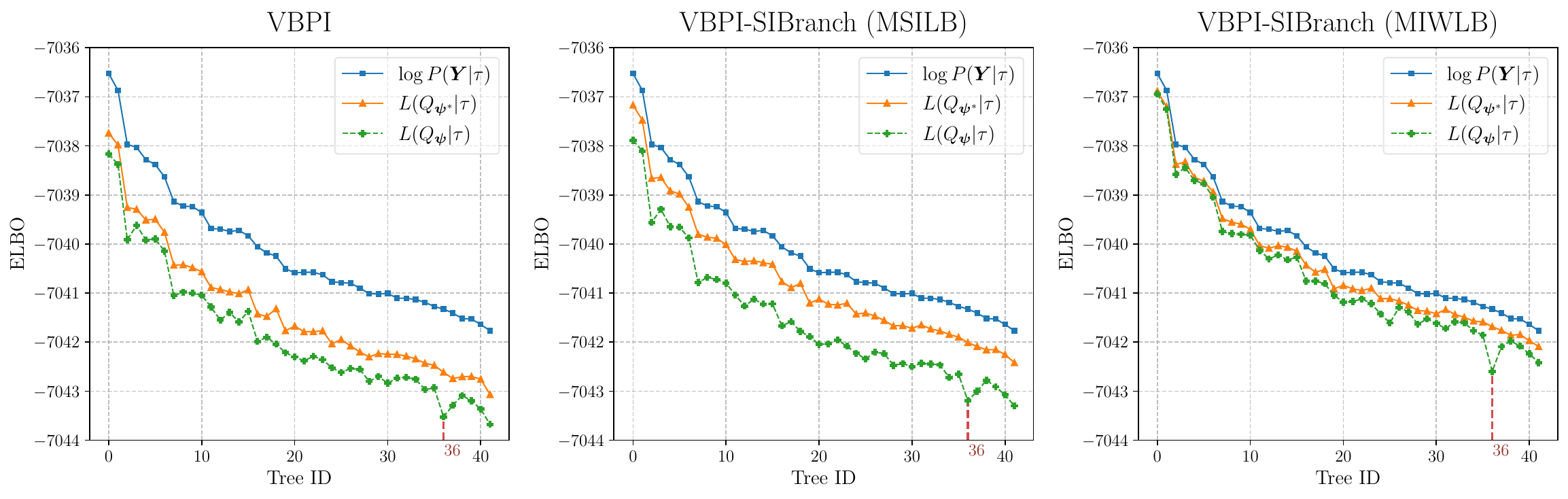}
\caption{Inference gaps on tree topologies in the 95\% credible set of DS1. 
The $L(Q_{\bm{\psi}}|\tau)$ refers to the ELBO of the variational approximation, and the $L(Q_{\bm{\psi}^\ast}|\tau)$ refers to the best ELBO that can be achieved by the corresponding variational family.
All lower bounds were computed by averaging over 10000 Monte Carlo samples. The ground truth marginal log-likelihood $\log P(\bm{Y}|\tau)$ is estimated using the generalized stepping-stone (GSS) algorithm \citep{Fan2010GSS}. 
}
\label{fig:inference-gap}
\end{figure}

\begin{table}[t]
    \centering
    \caption{Inference gaps on tree topologies in the 95\% credible set of DS1. The Avg. column refers to the average gaps over all tree topologies in the credible set. Results of VBPI-NF are from \citet{Zhang2020ImprovedVB}.}
\label{tab:my_label}
\resizebox{\linewidth}{!}{
    \begin{tabular}{ccccccccc}
\toprule
\multirow{2}{*}{Gap} &\multicolumn{2}{c}{VBPI}&\multicolumn{2}{c}{VBPI-NF}&\multicolumn{2}{c}{VBPI-SIBranch (MISLB)}&\multicolumn{2}{c}{VBPI-SIBranch (MIWLB)} \\
\cmidrule(l){2-3}\cmidrule(l){4-5}\cmidrule(l){6-7}\cmidrule(l){8-9}
 &  Avg.& Tree 36& Avg.&Tree 36& Avg.&Tree 36&Avg.&Tree 36\\
\midrule
Approximation&1.22&1.29&0.40&0.43&0.64&0.68&\textbf{0.34}&\textbf{0.36}\\
Amortization&0.51&\textbf{0.91}&0.93&1.83&0.80&1.19&\textbf{0.22}&\textbf{0.91}\\
Inference&1.73&2.20&1.33&2.26&1.44&1.87&\textbf{0.56}&\textbf{1.27}\\
\bottomrule
\end{tabular}
}
\end{table}
\paragraph{Inference Gaps on Individual Trees} To better understand the effect of semi-implicit branch length distributions for the overall improvement on variational approximation accuracy, we further evaluate the performance of different methods on individual trees in the 95\% credible set of DS1.
For a fixed tree topology $\tau$, we define the ELBO $L(Q_{\bm{\psi}}|\tau)$ of a variational approximation $Q_{\bm{\psi}}(\bm{q}|\tau)$ and the best ELBO that can be achieved by the corresponding variational family $\mathcal{Q}$ as
\[
L(Q_{\bm{\psi}}|\tau)
= \mathbb{E}_{Q_{\bm{\psi}}(\bm{q}|\tau)}\log\left(\frac{P(\bm{Y}|\tau,\bm{q})P(\bm{q})}{Q_{\bm{\psi}}(\bm{q}|\tau)}\right),\
L(Q_{\bm{\psi}^\ast}|\tau)
= \max_{Q_{\bm{\psi}}\in\mathcal{Q}} L(Q_{\bm{\psi}}|\tau).
\]
If $Q_{\bm{\psi}}(\bm{q}|\tau)$ is semi-implicit as in equation (\ref{eq:si-marginal-dist}), one may imitate MSILB and MIWLB to estimate $L(Q_{\bm{\psi}}|\tau)$ and $L(Q_{\bm{\psi}^\ast}|\tau)$, i.e.
\[
L(Q_{\bm{\psi}}|\tau) \approx L^J(Q_{\bm{\psi}}|\tau)
= \mathbb{E}_{Q_{\bm{\psi}}(\bm{q}, \bm{z}^{0}|\tau)Q_{\bm{\psi}}(\bm{z}^{1:J}|\tau)}
    \log\left(\frac{P(\bm{Y}|\tau,\bm{q})P(\bm{q})}{\frac{1}{J+1}\sum_{j=0}^JQ_{\bm{\psi}}(\bm{q}|\tau,\bm{z}^{j})}\right),\
L(Q_{\bm{\psi}^\ast}|\tau)
\approx \max_{Q_{\bm{\psi}}\in\mathcal{Q}} L^J(Q_{\bm{\psi}}|\tau),
\]
in the MSILB setting, or 
\[
\begin{array}{rcl}
L(Q_{\bm{\psi}}|\tau) &\approx & L^J_w(Q_{\bm{\psi}}, R_{\bm{\xi}}|\tau)
=  \mathbb{E}_{Q_{\bm{\psi}}(\bm{q}, \bm{z}^{0}|\tau)R_{\bm{\xi}}(\bm{z}^{1:J}|\tau,\bm{q})}
 \log\left(\frac{P(\bm{Y}|\tau,\bm{q})P(\bm{q})}{\frac{1}{J+1}\sum_{j=0}^J \frac{Q_{\bm{\psi}}(\bm{q}|\tau,\bm{z}^{j})Q_{\bm{\psi}}(\bm{z}^{j}|\tau)}{R_{\bm{\xi}}(\bm{z}^{i,j}|\tau^i,\bm{q}^i)}}\right),\\
L(Q_{\bm{\psi}^\ast}|\tau)
&\approx &\max_{Q_{\bm{\psi}}\in\mathcal{Q}, R_{\bm{\xi}}\in\mathcal{R}} L^J_w(Q_{\bm{\psi}}, R_{\bm{\xi}}|\tau),
\end{array}
\]
in the MIWLB setting. 
To compute the best ELBO $L(Q_{\bm{\psi}^\ast}|\tau)$, we take $J=50$ for training and $J=1000$ for evaluation in practice.
For a fixed tree topology $\tau$, the inference gap of each variational family is defined as the difference between the marginal log-likelihood $\log P(\bm{Y}|\tau)$ and the ELBO $L(Q_{\bm{\psi}}|\tau)$, which can be decomposed as
\[
\log P(\bm{Y}|\tau)-L(Q_{\bm{\psi}}|\tau) = \left[\log P(\bm{Y}|\tau)-L(Q_{\bm{\psi}}^\ast|\tau)\right] + \left[L(Q_{\bm{\psi}}^\ast|\tau)-L(Q_{\bm{\psi}}|\tau)\right],
\]
i.e., the sum of approximation and amortization gaps \citep{Cremer2018InferenceSI, Zhang2020ImprovedVB}.

Figure \ref{fig:inference-gap} shows the decomposition of the inference gap of different variational families on DS1.
In the left plot of Figure \ref{fig:inference-gap}, the large approximation gap indicates that the diagonal lognormal distribution in VBPI is too restricted to fit the true branch length distribution. 
In contrast, the semi-implicit branch length distribution in VBPI-SIBranch performs much better, as indicated by the considerably smaller approximation gaps in the middle and right plots.
Moreover, compared to VBPI-SIBranch with MSILB, VBPI-SIBranch with MIWLB significantly reduces the approximation gap and generalizes better to the tree topology space by employing a learnable importance distribution, as evidenced by the reduction of the amortization gap.

\paragraph{Branch Length Approximation}
To examine the approximation accuracy of the learned branch length model $Q_{\bm{\psi}}(\bm{q}|\tau)$ to the ground truth $P(\bm{q}|\tau,\bm{Y})$ more directly, we compare their empirical density functions estimated from branch length samples.
This also excludes the effects from the importance distribution in the lower bound comparison.
The total variation (TV) distance between two distributions 
with probability density function $P_1(\bm{x})$ and $P_2(\bm{x})$ ($\bm{x}\in\mathbb{R}^d$) is defined as 
\[
D_{\mathrm{TV}}(P_1\| P_2) = \frac{1}{2}\int_{\mathbb{R}^d}|P_1(\bm{x})-P_2(\bm{x})|\mathrm{d}\bm{x}.
\]
The left plot of Figure \ref{fig:tvd} shows the TV distance between the learned branch length variational distribution $Q_{\bm{\psi}}(\bm{q}|\tau)$ and the ground truth $P(\bm{q}|\tau,\bm{Y})$. 
We find that VBPI-SIBranch indeed provides a better approximation to the ground truth branch lengths than VBPI. 
Also, the variational approximation on tree 36 still has a relatively large error, which coincides with the observation in Figure \ref{fig:inference-gap}.
In fact, this relatively large approximation error of VBPI-SIBranch (MIWLB) on tree 36 is identified to be the result of the poor fitting on branch 35 (the middle right plot in Figure \ref{fig:tree37-selected}), and VBPI-SIBranch reaches better or comparable approximations on other branches.

\paragraph{Evaluation of Importance Weighting}
In the previous discussions, the importance weighting scheme as well as the learnable importance distribution employed in the MIWLB proved to be beneficial to the optimization of VBPI-SIBranch.
We now inspect the effect of important weighting more specifically.
In MIWLB, when using $R_{\bm{\xi}}(\bm{z}|\tau,\bm{q})$ as an importance distribution to estimate $Q_{\bm{\psi}}(\bm{q}|\tau) = \int Q_{\bm{\psi}}(\bm{q}|\tau,\bm{z})Q_{\bm{\psi}}(\bm{z}|\tau)\mathrm{d}\bm{z}$, the effective sample size (ESS) is defined as 
\[
\mathrm{ESS} = \mathbb{E}_{Q_{\bm{\psi}}(\bm{q}|\tau)}\mathbb{E}_{R_{\bm{\xi}}(\bm{z}^{1:J}|\tau,\bm{q})}\frac{1}{\sum_{j=1}^J w_j^2}, \quad w_j = \frac{
\frac{Q_{\bm{\psi}}(\bm{q}|\tau,\bm{z}^j)Q_{\bm{\psi}}(\bm{z}^j|\tau)}{R_{\bm{\xi}}(\bm{z}^j|\tau,\bm{q})}
}{
\sum_{i=1}^J\frac{ Q_{\bm{\psi}}(\bm{q}|\tau,\bm{z}^i)Q_{\bm{\psi}}(\bm{z}^i|\tau)}{R_{\bm{\xi}}(\bm{z}^i|\tau,\bm{q})}
}.
\]
ESS as a criterion is also suitable for MSILB by letting $R_{\bm{\xi}}(\bm{z}|\tau,\bm{q})=Q_{\bm{\psi}}(\bm{z}|\tau)$.
From the right plot in Figure \ref{fig:tvd}, we see that the ESS of MIWLB consistently outperforms that of MSILB, implying that the reverse model $R_{\bm{\xi}}(\bm{z}|\tau,\bm{q})$ in MIWLB indeed provides a better importance distribution.

\begin{figure}[t]
\centering
\includegraphics[width=\linewidth]{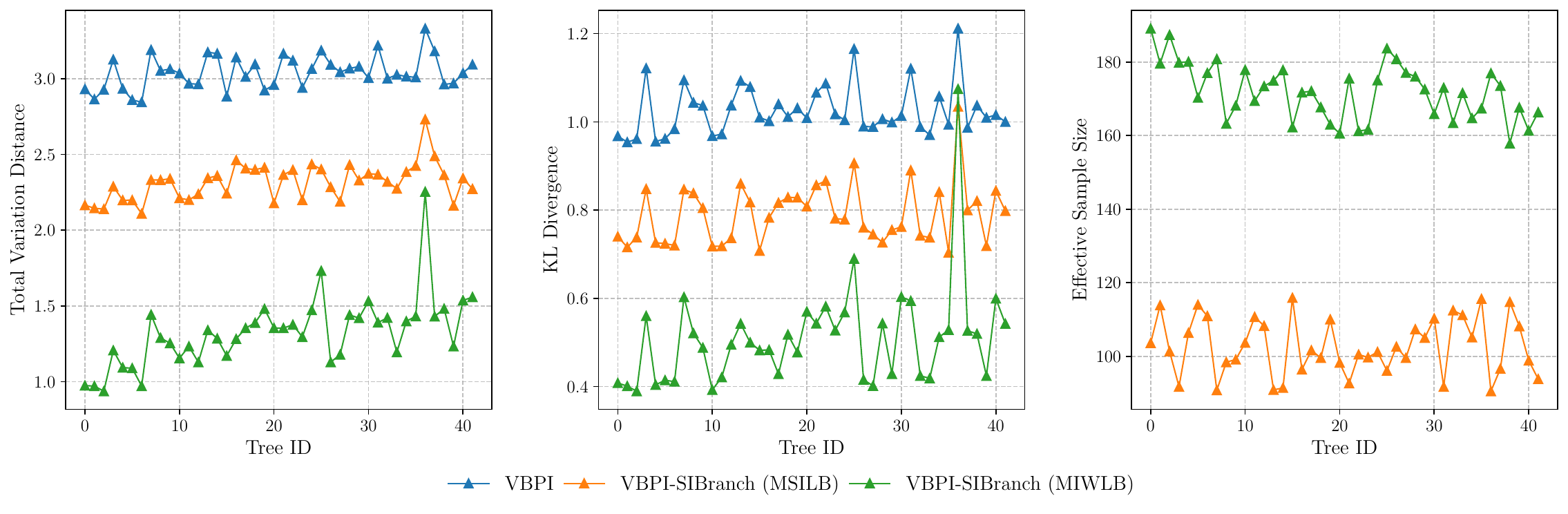}
\caption{Branch length approximation accuracy of different methods for VBPI on DS1. \textbf{Left/Middle}: The TV distance and KL divergence between the branch length variational distribution and the ground truth on individual tree topologies. \textbf{Right}: the effective sample size of the importance sampling estimation of $Q_{\bm{\psi}}(\bm{q}|\tau)$ in VBPI-SIBranch. To simplify computation, the TV distance and KL divergence are defined as $\sum_{e\in E(\tau)}D_{\mathrm{TV}}(Q_{\bm{\psi}}(q_e|\tau)\|P(q_e|\tau,\bm{Y}))$ and $\sum_{e\in E(\tau)}D_{\mathrm{KL}}(Q_{\bm{\psi}}(q_e|\tau)\|P(q_e|\tau,\bm{Y}))$, respectively, where one million samples are drawn from each distribution. The ground truth samples are gathered from a long MrBayes run with 4 chains for one billion iterations and sampled every 100 iterations.
}
\label{fig:tvd}
\end{figure}

\begin{figure}[t]
\centering
\includegraphics[width=\linewidth]{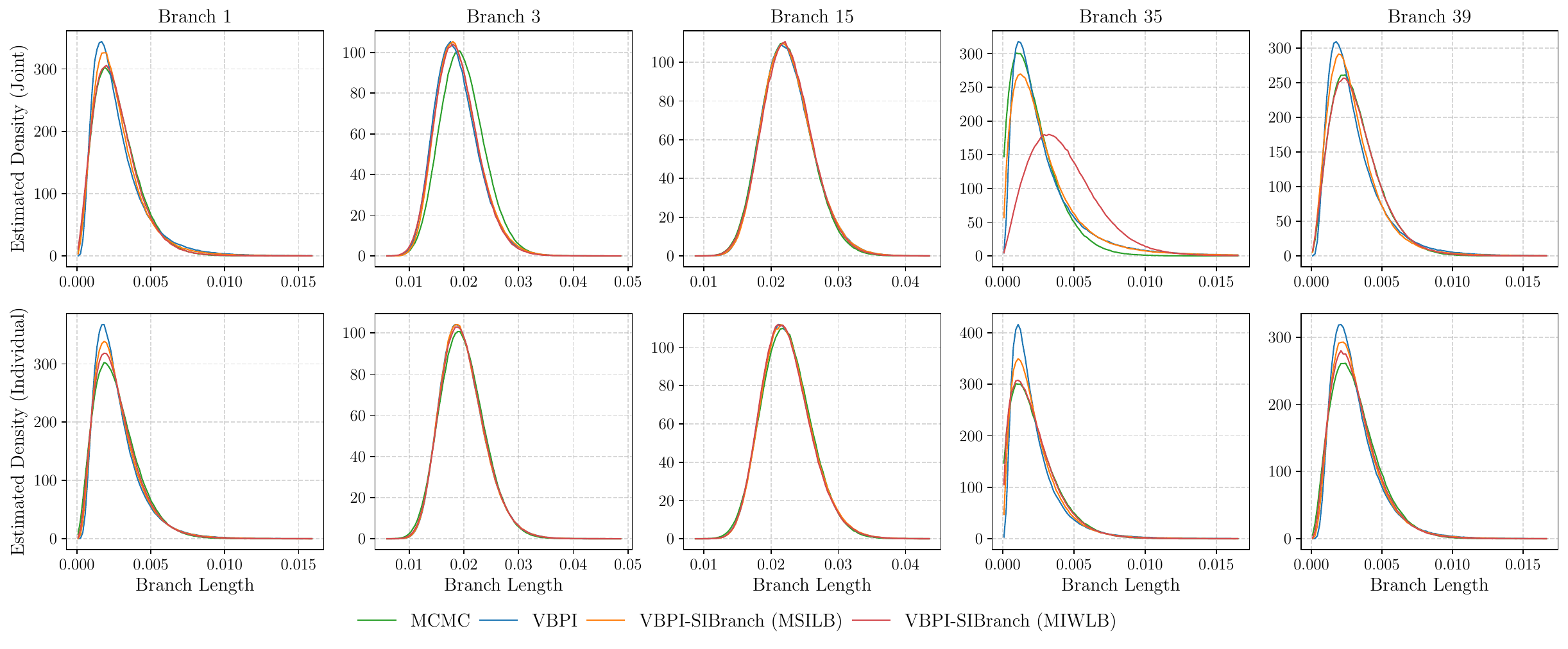}
\caption{Selected marginal branch length variational distributions obtained by different methods on tree 36 of DS1.
For each method, we estimated the probability density function with one million samples.
}
\label{fig:tree37-selected}
\end{figure}

\begin{figure}[t]
    \centering
    \includegraphics[width=0.7\linewidth]{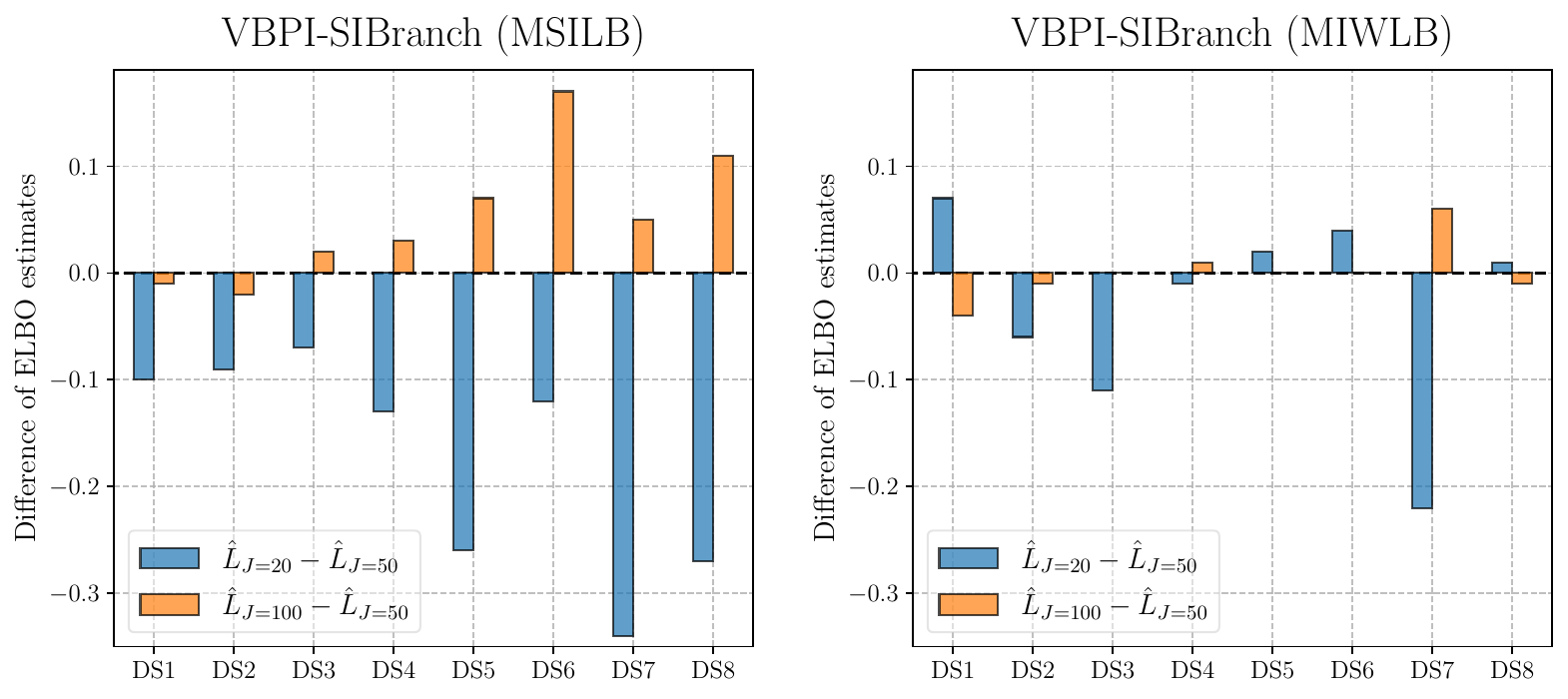}
    \caption{Ablation study about the number of extra samples $J$ in VBPI-SIBranch.
    For each method, we train the models using $K=10$ and different $J=20,50,100$, and estimate the ELBOs of the variational approximations for different training objectives ($\hat{L}_{J=20},\hat{L}_{J=50},\hat{L}_{J=100}$) with $K=1$ and $J=1000$.
    }
    \label{fig:ablation}
\end{figure}

\subsection{Ablation Studies}
Finally, we explore the effect of different numbers of extra samples $J$ on the performance of VBPI-SIBranch (Figure \ref{fig:ablation}).
We see that the ELBO estimates of VBPI-SIBranch (MSILB) get significantly better as the number of extra samples increases, while those of VBPI-SIBranch (MIWLB) exhibit randomness across different numbers of extra samples.
This implies that more extra samples are beneficial to the training of VBPI-SIBranch and MIWLB is less sensitive to the choice of the number of extra samples.

\section{Conclusion}\label{sec:conclusion}
This work presented VBPI-SIBranch, which incorporated a semi-implicit branch length model in the variational family of phylogenetic trees for VBPI.
We gave a concrete example of semi-implicit branch length distribution construction with graph neural networks.
Two surrogates of the multi-sample lower bound, i.e., multi-sample semi-implicit lower bound (MSILB) and multi-sample importance weighted lower bound (MIWLB), as training objectives were derived and their statistical properties were discussed.
Experiments on benchmark data sets demonstrated that VBPI-SIBranch achieves comparable or better results regarding marginal likelihood estimation and branch length approximation.
This work also showed the great potential of the variational inference for phylogenetic inference, aligned with some latest efforts in this domain \citep{Zhang2023learnable,xie2023artree,kviman2023vbpimixture}, and demonstrated the power of deep learning methods \citep{Zhang2023learnable} for representing phylogenetic trees.
Designing more flexible and scalable variational families for tree topologies and branch lengths based on powerful tree embeddings can be an important future direction in the field of variational phylogenetic inference.

\paragraph{Limitations}
Throughout this paper, the mixing distribution $Q_{\bm{\psi}}(\bm{z}|\tau)$ is set to a standard Gaussian distribution which ignores the dependency on $\tau$.
Designing $Q_{\bm{\psi}}(\bm{z}|\tau)$ with the information of $\tau$, e.g., learnable node features, would be an interesting future direction.

\section*{Acknowledgments}

The research of Cheng Zhang was supported in part by National Natural Science Foundation of China (grant no. 12201014 and grant no. 12292983), the National Engineering Laboratory for Big Data Analysis and Applications, the Key Laboratory of Mathematics and Its Applications (LMAM), and the Fundamental Research Funds for the Central Universities, Peking University. This research was partially supported through US National Institutes of Health grant R01 AI162611.

\bibliographystyle{nameyear}
\bibliography{main}

\newpage
\appendix
\small
\section{Details of Subsplit Bayesian Networks}\label{details-sbns}

\begin{figure}
\centering
\input{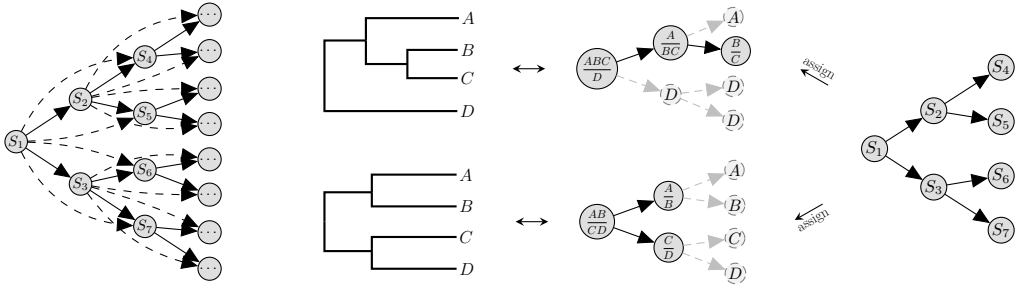}
\captionsetup{font={footnotesize}} 
\caption{Subsplit Bayesian networks and a simple example for a leaf set of 4 taxa (denoted by $A,B,C,D$ respectively).
{\bf Left:} General subsplit Bayesian networks. The solid full and complete binary tree network is $B^\ast_{\mathcal{X}}$.
The dashed arrows represent the additional dependence for more expressiveness.
{\bf Middle Left:} Examples of (rooted) phylogenetic trees that are hypothesized to model the evolutionary history of the taxa.
{\bf Middle Right:} The corresponding subsplit assignments for the trees.
For ease of illustration, subsplit $(Y,Z)$ is represented as $\frac{Y}{Z}$ in the graph.
{\bf Right:} The SBN for this example, which is $\mathcal{B}_\mathcal{X}^\ast$ in this case.
This figure is from \citet{Zhang2018GeneralizingTP}.}\label{fig:sbn}
\end{figure}

One recent and expressive graphical model that provides a flexible family of distributions over tree topologies is the subsplit Bayesian network, as proposed by \citet{Zhang2018GeneralizingTP}. 
Let $\mathcal{X}$ be the set of $N$ labeled leaf nodes.
A non-empty set $C$ of $\mathcal{X}$ is referred to as a \textit{clade} and the set of all clades of $\mathcal{X}$, denoted by $\mathcal{C}(\mathcal{X})$, is a totally ordered set with a partial order $\succ$ (e.g., lexicographical order) defined on it.
An ordered pair of clades $(W,Z)$ is called a \textit{subsplit} of a clade $C$ if it is a bipartition of $C$, i.e., $W\succ Z$, $W\cap Z=\emptyset$, and $W\cup Z=C$. 

\begin{definition}[Subsplit Bayesian Network]
A subsplit Bayesian network (SBN) $\mathcal{B}_{\mathcal{X}}$ on a leaf node set $\mathcal{X}$ of size $N$ is defined as a Bayesian network whose nodes take on subsplit or singleton clade values of $\mathcal{X}$ and has the following properties: (a) The root node of $\mathcal{B}_{\mathcal{X}}$ takes on subsplits of the entire labeled leaf node set $\mathcal{X}$; (b) $\mathcal{B}_{\mathcal{X}}$ contains a full and complete binary tree network $B^\ast_{\mathcal{X}}$ as a subnetwork; (c) The depth of $B_{\mathcal{X}}$ is $N-1$, with the root counted as depth $1$.
\end{definition}

Due to the binary structure of $B^\ast_{\mathcal{X}}$, the nodes in SBNs can be indexed by denoting the root node with $S_1$ and two children of $S_i$ with $S_{2i}$ and $S_{2i+1}$ recursively where $S_i$ is an internal node (see the left plot in Figure \ref{fig:sbn}). For any rooted tree topology, by assigning the corresponding subsplits or singleton clades values $\{S_i=s_i\}_{i\geq 1}$ to its nodes, one can uniquely map it into an SBN node assignment (see the middle and right plots in Figure \ref{fig:sbn}). 

As Bayesian networks, the SBN-based probability of a rooted tree topology $\tau$ takes the following form
\begin{equation}\label{sbn-prob}
p_{\mathrm{sbn}}(T=\tau) = p(S_1=s_1)\prod_{i>1}p(S_i=s_i|S_{\pi_i} = s_{\pi_i}),
\end{equation}
where $\pi_i$ is the index set of the parents of node $i$.
For unrooted tree topologies, we can also define their SBN-based probabilities by viewing them as rooted tree topologies with unobserved roots and integrating the positions of the root node
as follows:
\begin{equation}\label{sbn-prob-unrooted}
p_{\mathrm{sbn}}(T^{\mathrm{u}}=\tau) =\sum_{e\in E(\tau)}p_{\mathrm{sbn}}(\tau^{e}) 
\end{equation}
where $\tau^e$ is the resulting rooted tree topology when the rooting position is on edge $e$.

In practice, SBNs are parameterized according to the \textit{conditional probability sharing} principle where the conditional probability for parent-child subsplit pairs are shared across the SBN network, regardless of their locations. The set of all conditional probabilities are called conditional probability tables (CPTs).
Parameterizing SBNs, therefore, often requires finding an appropriate support of CPTs.
For tree topology density estimation, this can be done using the sample of tree topologies that is given as the data set.
For variational Bayesian phylogenetic inference, as no sample of tree topologies is available, one often resorts to fast bootstrap or MCMC methods \citep{Minh2013UltrafastAF, Zhang2020ImprovedVB}.
Let $\mathbb{S}_{\mathrm{r}}$ denote the root subsplits and $\mathbb{S}_{\mathrm{ch|pa}}$ denotes the child-parent subsplit pairs in the support.
The parameters of SBNs are then $p=\{p_{s_1}; s_1\in \mathbb{S}_{\mathrm{r}}\}\cup \{p_{s|t}; s|t\in\mathbb{S}_{\mathrm{ch|pa}} \}$ where
\begin{equation}\label{eq:sbn_param}
 p_{s_1} = p(S_1=s_1),\quad p_{s|t} = p(S_i=s|S_{\pi_i}=t), \; \forall i>1. 
\end{equation}

\section{Theoretical Results}
\subsection{Proof of Theorem \ref{thm:sivi}}\label{app:proof-sivi}
The asymptotically unbiasedness is a direct result of the strong law of large numbers. To prove $L^{K,J}(\bm{\phi},\bm{\psi}) \leq L^{K,J+1}(\bm{\phi},\bm{\psi}) \leq L^{K}(\bm{\phi},\bm{\psi}),\forall J$, we have three steps as follows.

\paragraph{Step 1} As the first step, we will give alternative expressions for $L^{K,J}(\bm{\phi},\bm{\psi})$ and $L^{K}(\bm{\phi},\bm{\psi})$.
Let $Q_{\bm{\psi}}^{J}(\bm{q}|\tau^k,\bm{z}^{k,0:J}) = \frac{1}{J+1}\sum_{j=0}^JQ_{\bm{\psi}}(\bm{q}|\tau^k,\bm{z}^{k,j})$.
By symmetry, we have
\begin{align*}
& L^{K,J}(\bm{\phi},\bm{\psi})\\
=& \frac{1}{J+1}\sum_{j=0}^J \mathbb{E}_{\left\langle(\tau^k,\bm{q}^k, \bm{z}^{k,j})\sim Q_{\bm{\phi},\bm{\psi}}(\tau,\bm{q},\bm{z})\right\rangle_{k=1}^K} \mathbb{E}_{\left\langle\bm{z}^{k,(0:J)\backslash j}\sim Q_{\bm{\psi}}(\bm{z}|\tau^k)\right\rangle_{k=1}^K}\log\left(\frac{1}{K}\sum_{k=1}^K\frac{P(\bm{Y}|\tau^k,\bm{q}^k)P(\tau^k,\bm{q}^k)}{Q_{\bm{\phi}}(\tau^k)\frac{1}{J+1}\sum_{j=0}^JQ_{\bm{\psi}}(\bm{q}^k|\tau^k,\bm{z}^{k,j})}\right)\\
=& \mathbb{E}_{\left\langle\tau^{k}\sim Q_{\bm{\phi}}(\tau)\right\rangle_{k=1}^K}
\mathbb{E}_{\left\langle\bm{z}^{k,0:J}\sim Q_{\bm{\psi}}(\bm{z}|\tau^k)\right\rangle_{k=1}^K}
\mathbb{E}_{\left\langle \bm{q}^{k}\sim Q_{\bm{\psi}}^{J}(\bm{q}|\tau^{k},\bm{z}^{k,0:J})\right\rangle_{k=1}^K} 
\log\left(\frac{1}{K}\sum_{k=1}^K\frac{P(\bm{Y}|\tau^k,\bm{q}^k)P(\tau^k,\bm{q}^k)}{Q_{\bm{\phi}}(\tau^k)Q_{\bm{\psi}}^{J}(\bm{q}^k|\tau^k,\bm{z}^{k,0:J})}\right).
\end{align*}    
where $(0:J)\backslash j=\{0,\ldots,j-1\}\cup\{j+1,\ldots,J\}$.
Using the fact that 
\[
\mathbb{E}_{\bm{z}^{k,0:J}\sim Q_{\bm{\psi}}(\bm{z}|\tau^k)}Q_{\bm{\psi}}^{J}(\bm{q}|\tau^k,\bm{z}^{k,0:J}) = Q_{\bm{\psi}}(\bm{q}|\tau^k),\quad k=1,\ldots,K,
\]
we can rewrite $L^K(\bm{\phi},\bm{\psi})$ as
\[
L^K(\bm{\phi},\bm{\psi}) = \mathbb{E}_{\left\langle\tau^{k}\sim Q_{\bm{\phi}}(\tau)\right\rangle_{k=1}^K}
\mathbb{E}_{\left\langle\bm{z}^{k,0:J}\sim Q_{\bm{\psi}}(\bm{z}|\tau^k)\right\rangle_{k=1}^K}
\mathbb{E}_{\left\langle \bm{q}^{k}\sim Q_{\bm{\psi}}^{J}(\bm{q}|\tau^{k},\bm{z}^{k,0:J})\right\rangle_{k=1}^K} 
\log\left(\frac{1}{K}\sum_{k=1}^K\frac{P(\bm{Y}|\tau^k,\bm{q}^k)P(\tau^k,\bm{q}^k)}{Q_{\bm{\phi}}(\tau^k)Q_{\bm{\psi}}(\bm{q}^k|\tau^k)}\right).
\]
In this way, $L^{K,J}(\bm{\phi},\bm{\psi})$ and $L^{K}(\bm{\phi},\bm{\psi})$ share the same reference distribution for expectation.

\paragraph{Step 2} Let $Q_{\bm{\phi},\bm{\psi}}^J(\tau^{1:K},\bm{q}^{1:K},\bm{z}^{1:K,0:J})=\prod_{k=1}^KQ_{\bm{\psi}}^{J}(\bm{q}^k|\tau^k,\bm{z}^{k,0:J})Q_{\bm{\psi}}(\bm{z}^{k,0:J}|\tau^k)Q_{\bm{\phi}}(\tau^k)$. We will show that the following two functions are both probability density functions:
\[
\left\{
\begin{array}{rcl}
f_{\bm{\phi},\bm{\psi}}^J(\tau^{1:K},\bm{q}^{1:K},\bm{z}^{1:K,0:J}) &=& \frac{
\sum_{k=1}^K\frac{P(\bm{Y}|\tau^k,\bm{q}^k)P(\tau^k,\bm{q}^k)}{Q_{\bm{\phi}}(\tau^k)Q_{\bm{\psi}}^{J}(\bm{q}^k|\tau^k,\bm{z}^{k,0:J})}
}{
\sum_{k=1}^K\frac{P(\bm{Y}|\tau^k,\bm{q}^k)P(\tau^k,\bm{q}^k)}{Q_{\bm{\phi}}(\tau^k)Q_{\bm{\psi}}(\bm{q}^k|\tau^k)}
}
Q_{\bm{\phi},\bm{\psi}}^J (\tau^{1:K},\bm{q}^{1:K},\bm{z}^{1:K,0:J});  \\
h_{\bm{\phi},\bm{\psi}}^J(\tau^{1:K},\bm{q}^{1:K},\bm{z}^{1:K,0:J+1}) &=& \frac{\sum_{k=1}^K\frac{P(\bm{Y}|\tau^k,\bm{q}^k)P(\tau^k,\bm{q}^k)}{Q_{\bm{\phi}}(\tau^k)Q_{\bm{\psi}}^{J}(\bm{q}^k|\tau^k,\bm{z}^{k,0:J})}}{\sum_{k=1}^K\frac{P(\bm{Y}|\tau^k,\bm{q}^k)P(\tau^k,\bm{q}^k)}{Q_{\bm{\phi}}(\tau^k)Q_{\bm{\psi}}^{J+1}(\bm{q}^k|\tau^k,\bm{z}^{k,0:J+1})}}
Q_{\bm{\phi},\bm{\psi}}^{J+1}(\tau^{1:K},\bm{q}^{1:K},\bm{z}^{1:K,0:J+1}).
\end{array}
\right.
\]

To prove $f_{\bm{\phi},\bm{\psi}}^J$ is a probability density function, we first integrate out $\bm{z}^{1:K,0:J}$, i.e.
\begin{align*}
&\int f_{\bm{\phi},\bm{\psi}}^J(\tau^{1:K},\bm{q}^{1:K},\bm{z}^{1:K,0:J})\;d\bm{z}^{1:K,0:J} \\
=& \frac{1}{
\sum_{k=1}^K\frac{P(\bm{Y}|\tau^k,\bm{q}^k)P(\tau^k,\bm{q}^k)}{Q_{\bm{\phi}}(\tau^k)Q_{\bm{\psi}}(\bm{q}^k|\tau^k)}
}
\cdot\sum_{k=1}^KP(\bm{Y}|\tau^k,\bm{q}^k)P(\tau^k,\bm{q}^k)\int \left[\prod_{l\neq k}Q_{\bm{\phi}}(\tau^l)Q_{\bm{\psi}}^{J}(\bm{q}^l|\tau^l,\bm{z}^{l,0:J})\right]\prod_{l=1}^KQ_{\bm{\psi}}(\bm{z}^{l,0:J}|\tau^l)\;d\bm{z}^{1:K,0:J}\\
=&\frac{\sum_{k=1}^KP(\bm{Y}|\tau^k,\bm{q}^k)P(\tau^k,\bm{q}^k)\prod_{l\neq k}Q_{\bm{\phi}}(\tau^l)Q_{\bm{\psi}}(\bm{q}^l|\tau^l)}{\sum_{k=1}^K\frac{P(\bm{Y}|\tau^k,\bm{q}^k)P(\tau^k,\bm{q}^k)}{Q_{\bm{\phi}}(\tau^k)Q_{\bm{\psi}}(\bm{q}^k|\tau^k)}}.
\end{align*}
Noting that
\[
\sum_{k=1}^KP(\bm{Y}|\tau^k,\bm{q}^k)P(\tau^k,\bm{q}^k)\prod_{l\neq k}Q_{\bm{\phi}}(\tau^l)Q_{\bm{\psi}}(\bm{q}^l|\tau^l) = \sum_{k=1}^K \frac{P(\bm{Y}|\tau^k,\bm{q}^k)P(\tau^k,\bm{q}^k)}{Q_{\bm{\phi}}(\tau^k)Q_{\bm{\psi}}(\bm{q}^k|\tau^k)} \cdot \prod_{l=1}^KQ_{\bm{\phi}}(\tau^l)Q_{\bm{\psi}}(\bm{q}^l|\tau^l),
\]
we therefore have
\[
\int f_{\bm{\phi},\bm{\psi}}^J(\tau^{1:K},\bm{q}^{1:K},\bm{z}^{1:K,0:J})\;d\bm{z}^{1:K,0:J} = \prod_{l=1}^KQ_{\bm{\phi}}(\tau^l)Q_{\bm{\psi}}(\bm{q}^l|\tau^l),
\]
which is clearly a density function of $\tau^{1:K}$ and $\bm{q}^{1:K}$.
    
To prove $h_{\bm{\phi},\bm{\psi}}^J$ is a probability density function,
it suffices to show
    \[
\mathbb{E}_{\left\langle\tau^{k}\sim Q_{\bm{\phi}}(\tau)\right\rangle_{k=1}^K}
\mathbb{E}_{\left\langle\bm{z}^{k,0:J+1}\sim Q_{\bm{\psi}}(\bm{z}|\tau^k)\right\rangle_{k=1}^K}
\mathbb{E}_{\left\langle \bm{q}^{k}\sim Q_{\bm{\psi}}^{J}(\bm{q}|\tau^{k},\bm{z}^{k,0:J+1})\right\rangle_{k=1}^K} 
\frac{\sum_{k=1}^K\frac{P(\bm{Y}|\tau^k,\bm{q}^k)P(\tau^k,\bm{q}^k)}{Q_{\bm{\phi}}(\tau^k)Q_{\bm{\psi}}^{J}(\bm{q}^k|\tau^k,\bm{z}^{k,0:J})}}{\sum_{k=1}^K\frac{P(\bm{Y}|\tau^k,\bm{q}^k)P(\tau^k,\bm{q}^k)}{Q_{\bm{\phi}}(\tau^k)Q_{\bm{\psi}}^{J+1}(\bm{q}^k|\tau^k,\bm{z}^{k,0:J+1})}} = 1.
\]
Let $\{I_k: I_k\subset\{0,\ldots,J+1\}, |I_k|=J+1, k=1,\ldots, K\}$ be uniformly distributed subsets with distinct indices from $\{0,\ldots,J+1\}$. Let $Q_{\bm{\psi}}^{J}(\bm{q}|\tau^k,\bm{z}^{k,I_k}) = \frac{1}{J+1}\sum_{j\in I_k}Q_{\bm{\psi}}(\bm{q}|\tau^k,\bm{z}^{k,j})$. By symmetry, we have
\begin{align*}
&\mathbb{E}_{\left\langle\bm{z}^{k,0:J+1}\sim Q_{\bm{\psi}}(\bm{z}|\tau^k)\right\rangle_{k=1}^K}
\mathbb{E}_{\left\langle \bm{q}^{k}\sim Q_{\bm{\psi}}^{J}(\bm{q}|\tau^{k},\bm{z}^{k,0:J+1})\right\rangle_{k=1}^K} 
\frac{\sum_{k=1}^K\frac{P(\bm{Y}|\tau^k,\bm{q}^k)P(\tau^k,\bm{q}^k)}{Q_{\bm{\phi}}(\tau^k)Q_{\bm{\psi}}^{J}(\bm{q}^k|\tau^k,\bm{z}^{k,0:J})}}{\sum_{k=1}^K\frac{P(\bm{Y}|\tau^k,\bm{q}^k)P(\tau^k,\bm{q}^k)}{Q_{\bm{\phi}}(\tau^k)Q_{\bm{\psi}}^{J+1}(\bm{q}^k|\tau^k,\bm{z}^{k,0:J+1})}} \\
=&\mathbb{E}_{\left\langle\bm{z}^{k,0:J+1}\sim Q_{\bm{\psi}}(\bm{z}|\tau^k)\right\rangle_{k=1}^K}\mathbb{E}_{I_{1:K}}\mathbb{E}_{\left\langle \bm{q}^{k}\sim Q_{\bm{\psi}}^{J}(\bm{q}|\tau^k,\bm{z}^{k,I_k})\right\rangle_{k=1}^K}
\frac{\sum_{k=1}^K\frac{P(\bm{Y}|\tau^k,\bm{q}^k)P(\tau^k,\bm{q}^k)}{Q_{\bm{\phi}}(\tau^k)Q_{\bm{\psi}}^{J}(\bm{q}^k|\tau^k,\bm{z}^{k,0:J})}}{\sum_{k=1}^K\frac{P(\bm{Y}|\tau^k,\bm{q}^k)P(\tau^k,\bm{q}^k)}{Q_{\bm{\phi}}(\tau^k)Q_{\bm{\psi}}^{J+1}(\bm{q}^k|\tau^k,\bm{z}^{k,0:J+1})}}\\
=& \mathbb{E}_{\left\langle\bm{z}^{k,0:J+1}\sim Q_{\bm{\psi}}(\bm{z}|\tau^k)\right\rangle_{k=1}^K}\mathbb{E}_{I_{1:K}}
\left(\int\frac{\sum_{k=1}^K\frac{P(\bm{Y}|\tau^k,\bm{q}^k)P(\tau^k,\bm{q}^k)}{Q_{\bm{\phi}}(\tau^k)}\prod_{l\neq k}Q_{\bm{\psi}}^{J}(\bm{q}^l|\tau^l,\bm{z}^{l,I_l})}{\sum_{k=1}^K\frac{P(\bm{Y}|\tau^k,\bm{q}^k)P(\tau^k,\bm{q}^k)}{Q_{\bm{\phi}}(\tau^k)Q_{\bm{\psi}}^{J+1}(\bm{q}^k|\tau^k,\bm{z}^{k,0:J+1})}}\;d\bm{q}^{1:K}\right)\\
=& \mathbb{E}_{\left\langle\bm{z}^{k,0:J+1}\sim Q_{\bm{\psi}}(\bm{z}|\tau^k)\right\rangle_{k=1}^K}
\left(\int\frac{\sum_{k=1}^K\frac{P(\bm{Y}|\tau^k,\bm{q}^k)P(\tau^k,\bm{q}^k)}{Q_{\bm{\phi}}(\tau^k)}\prod_{l\neq k}Q_{\bm{\psi}}^{J+1}(\bm{q}^l|\tau^l,\bm{z}^{l,0:J+1})}{\sum_{k=1}^K\frac{P(\bm{Y}|\tau^k,\bm{q}^k)P(\tau^k,\bm{q}^k)}{Q_{\bm{\phi}}(\tau^k)Q_{\bm{\psi}}^{J+1}(\bm{q}^k|\tau^k,\bm{z}^{k,0:J+1})}}\;d\bm{q}^{1:K}\right)\\
=&\mathbb{E}_{\left\langle\bm{z}^{k,0:J+1}\sim Q_{\bm{\psi}}(\bm{z}|\tau^k)\right\rangle_{k=1}^K}
\int \prod_{l=1}^K Q_{\bm{\psi}}^{J+1}(\bm{q}^l|\tau^l,\bm{z}^{l,0:J+1})\;d\bm{q}^{1:K}.\\
=&1.
\end{align*}
Here, we use the fact that
\[
E_{I_l}Q_{\bm{\psi}}^{J}(\bm{q}^l|\tau^l,\bm{z}^{l,I_l}) = Q_{\bm{\psi}}^{J+1}(\bm{q}^l|\tau^l,\bm{z}^{l,0:J+1}),\quad \forall\; l=1,\ldots,K.
\]

\paragraph{Step 3} Now, we are ready to prove that $L^{K,J}(\bm{\phi},\bm{\psi}) \leq L^{K,J+1}(\bm{\phi},\bm{\psi}) \leq L^{K}(\bm{\phi},\bm{\psi}),\forall J$. The gap between $L^K$ and $L^{K,J}$ can be expressed as
\begin{align*}
&L^K(\bm{\phi},\bm{\psi}) - L^{K,J}(\bm{\phi},\bm{\psi}) \\
= & \mathbb{E}_{\left\langle\tau^{k}\sim Q_{\bm{\phi}}(\tau)\right\rangle_{k=1}^K}
\mathbb{E}_{\left\langle\bm{z}^{k,0:J}\sim Q_{\bm{\psi}}(\bm{z}|\tau^k)\right\rangle_{k=1}^K}
\mathbb{E}_{\left\langle \bm{q}^{k}\sim Q_{\bm{\psi}}^{J}(\bm{q}|\tau^{k},\bm{z}^{k,0:J})\right\rangle_{k=1}^K} 
\log\left(\frac{\sum_{k=1}^K\frac{P(\bm{Y}|\tau^k,\bm{q}^k)P(\tau^k,\bm{q}^k)}{Q_{\bm{\phi}}(\tau^k)Q_{\bm{\psi}}(\bm{q}^k|\tau^k)}}{\sum_{k=1}^K\frac{P(\bm{Y}|\tau^k,\bm{q}^k)P(\tau^k,\bm{q}^k)}{Q_{\bm{\phi}}(\tau^k)Q_{\bm{\psi}}^{J}(\bm{q}^k|\tau^k,\bm{z}^{k,0:J})}}\right).\\
= & \mathbb{E}_{\left\langle\tau^{k}\sim Q_{\bm{\phi}}(\tau)\right\rangle_{k=1}^K}
\mathbb{E}_{\left\langle\bm{z}^{k,0:J}\sim Q_{\bm{\psi}}(\bm{z}|\tau^k)\right\rangle_{k=1}^K}
\mathbb{E}_{\left\langle \bm{q}^{k}\sim Q_{\bm{\psi}}^{J}(\bm{q}|\tau^{k},\bm{z}^{k,0:J})\right\rangle_{k=1}^K}
\log\left(\frac{Q_{\bm{\phi},\bm{\psi}}^J(\tau^{1:K},\bm{q}^{1:K},\bm{z}^{1:K,0:J})}{f_{\bm{\phi},\bm{\psi}}^J(\tau^{1:K},\bm{q}^{1:K},\bm{z}^{1:K,0:J})}\right) \\
= & \mathrm{KL}\left(Q_{\bm{\phi},\bm{\psi}}^J(\tau^{1:K},\bm{q}^{1:K},\bm{z}^{1:K,0:J})\| f_{\bm{\phi},\bm{\psi}}^J(\tau^{1:K},\bm{q}^{1:K},\bm{z}^{1:K,0:J})\right)
\end{align*}
This proves that $L^{K,J}(\bm{\phi},\bm{\psi})\leq L^{K}(\bm{\phi},\bm{\psi})$.
Using a similar argument, 
\begin{align*}
&L^{K,J+1}(\bm{\phi},\bm{\psi}) - L^{K,J}(\bm{\phi},\bm{\psi}) \\
= & \mathbb{E}_{\left\langle\tau^{k}\sim Q_{\bm{\phi}}(\tau)\right\rangle_{k=1}^K}
\mathbb{E}_{\left\langle\bm{z}^{k,0:J+1}\sim Q_{\bm{\psi}}(\bm{z}|\tau^k)\right\rangle_{k=1}^K}
\mathbb{E}_{\left\langle \bm{q}^{k}\sim Q_{\bm{\psi}}^{J}(\bm{q}|\tau^{k},\bm{z}^{k,0:J+1})\right\rangle_{k=1}^K} 
\log\left(\frac{\sum_{k=1}^K\frac{P(\bm{Y}|\tau^k,\bm{q}^k)P(\tau^k,\bm{q}^k)}{Q_{\bm{\phi}}(\tau^k)Q_{\bm{\psi}}^{J+1}(\bm{q}^k|\tau^k,\bm{z}^{k,0:J})}}{\sum_{k=1}^K\frac{P(\bm{Y}|\tau^k,\bm{q}^k)P(\tau^k,\bm{q}^k)}{Q_{\bm{\phi}}(\tau^k)Q_{\bm{\psi}}^{J}(\bm{q}^k|\tau^k,\bm{z}^{k,0:J})}}\right).\\
= & \mathbb{E}_{\left\langle\tau^{k}\sim Q_{\bm{\phi}}(\tau)\right\rangle_{k=1}^K}
\mathbb{E}_{\left\langle\bm{z}^{k,0:J+1}\sim Q_{\bm{\psi}}(\bm{z}|\tau^k)\right\rangle_{k=1}^K}
\mathbb{E}_{\left\langle \bm{q}^{k}\sim Q_{\bm{\psi}}^{J}(\bm{q}|\tau^{k},\bm{z}^{k,0:J+1})\right\rangle_{k=1}^K} 
\log\left(\frac{Q_{\bm{\phi},\bm{\psi}}^{J+1}(\tau^{1:K},\bm{q}^{1:K},\bm{z}^{1:K,0:J+1})}{h_{\bm{\phi},\bm{\psi}}^{J}(\tau^{1:K},\bm{q}^{1:K},\bm{z}^{1:K,0:J+1})}\right).\\
= & \mathrm{KL}\left(Q_{\bm{\phi},\bm{\psi}}^{J+1}(\tau^{1:K},\bm{q}^{1:K},\bm{z}^{1:K,0:J+1})\| h_{\bm{\phi},\bm{\psi}}^{J}(\tau^{1:K},\bm{q}^{1:K},\bm{z}^{1:K,0:J+1})\right).
\end{align*}
This proves that $L^{K,J}(\bm{\phi},\bm{\psi})\leq L^{K,J+1}(\bm{\phi},\bm{\psi})$.
\qed

\subsection{Proof of Theorem \ref{thm:iwhvi}}
We will prove Theorem \ref{thm:iwhvi} following a similar three steps procedure as in the Appendix \ref{app:proof-sivi}. Note that  asymptotically unbiasedness of $L^{K,J}_w(\bm{\phi},\bm{\psi},\bm{\xi})$ is still a direct result of the strong law of large numbers.
\paragraph{Step 1} We first derive alternative expressions for $L^{K,J}_w(\bm{\phi},\bm{\psi},\bm{\xi})$ and $L^{K}(\bm{\phi},\bm{\psi})$.
Let $H^J_{\bm{\psi},\bm{\xi}}(\bm{q}^k,\bm{z}^{k,0:J}|\tau^k)=\frac{1}{J+1}\sum_{j=0}^J \frac{Q_{\bm{\psi}}(\bm{z}^{k,j}|\tau^k)Q_{\bm{\psi}}(\bm{q}^k|\tau^k,\bm{z}^{k,j})}{R_{\bm{\xi}}(\bm{z}^{k,j}|\tau^k,\bm{q}^k)}$ and 
\[
Q_{\bm{\phi},\bm{\psi},\bm{\xi}}^{J}(\tau^{1:K},\bm{q}^{1:K},\bm{z}^{1:K,0:J}) = \prod_{k=1}^K H^J_{\bm{\psi},\bm{\xi}}(\bm{q}^k,\bm{z}^{k,0:J}|\tau^k) R_{\bm{\xi}}(\bm{z}^{k,0:J}|\tau^k,\bm{q}^k)Q_{\bm{\phi}}(\tau^k).
\]
Note that $Q_{\bm{\phi},\bm{\psi},\bm{\xi}}^{J}(\tau^{1:K},\bm{q}^{1:K},\bm{z}^{1:K,0:J})$ is indeed a valid proability density function.
By symmetry,
\begin{align*}
& L^{K,J}_w(\bm{\phi},\bm{\psi},\bm{\xi})\\
=& \frac{1}{J+1}\sum_{j=0}^J \mathbb{E}_{\left\langle(\tau^k,\bm{q}^k, \bm{z}^{k,j})\sim Q_{\bm{\phi},\bm{\psi}}(\tau,\bm{q},\bm{z})\right\rangle_{k=1}^K} \mathbb{E}_{\left\langle\bm{z}^{k,(0:J)\backslash j}\sim Q_{\bm{\psi}}(\bm{z}|\tau^k,\bm{q}^k)\right\rangle_{k=1}^K}\log\left(\frac{1}{K}\sum_{k=1}^K\frac{P(\bm{Y}|\tau^k,\bm{q}^k)P(\tau^k,\bm{q}^k)}{Q_{\bm{\phi}}(\tau^k)H^J_{\bm{\psi},\bm{\xi}}(\bm{q}^k,\bm{z}^{k,0:J}|\tau^k)}\right)\\
=& \mathbb{E}_{(\tau^{1:K},\bm{q}^{1:K},\bm{z}^{1:K,0:J})\sim Q_{\bm{\phi},\bm{\psi},\bm{\xi}}^{J}(\tau^{1:K},\bm{q}^{1:K},\bm{z}^{1:K,0:J})}
\log\left(\frac{1}{K}\sum_{k=1}^K\frac{P(\bm{Y}|\tau^k,\bm{q}^k)P(\tau^k,\bm{q}^k)}{Q_{\bm{\phi}}(\tau^k)H^J_{\bm{\psi},\bm{\xi}}(\bm{q}^k,\bm{z}^{k,0:J}|\tau^k)}\right).
\end{align*}  
Using the fact that 
\[
\int Q_{\bm{\phi},\bm{\psi},\bm{\xi}}^{J}(\tau^{1:K},\bm{q}^{1:K},\bm{z}^{1:K,0:J})\; d \bm{z}^{1:K,0:J}= Q_{\bm{\psi}}(\bm{q}^{1:K},\tau^{1:K})
\]
we can rewrite $L^K(\bm{\phi},\bm{\psi})$ as
\[
L^K(\bm{\phi},\bm{\psi}) = \mathbb{E}_{(\tau^{1:K},\bm{q}^{1:K},\bm{z}^{1:K,0:J})\sim Q_{\bm{\phi},\bm{\psi},\bm{\xi}}^{J}(\tau^{1:K},\bm{q}^{1:K},\bm{z}^{1:K,0:J})}
\log\left(\frac{1}{K}\sum_{k=1}^K\frac{P(\bm{Y}|\tau^k,\bm{q}^k)P(\tau^k,\bm{q}^k)}{Q_{\bm{\phi}}(\tau^k)Q_{\bm{\psi}}(\bm{q}^k|\tau^k)}\right).
\]
Therefore, the $L^{K,J}_w(\bm{\phi},\bm{\psi},\bm{\xi})$ and $L^K(\bm{\phi},\bm{\psi})$ share the same reference distribution in expectation, as in Appendix \ref{app:proof-sivi}.

\paragraph{Step 2}
Next, we will show the following two functions are both probability density functions:
\begin{equation*}
\left\{
\begin{array}{rcl}
f_{\bm{\phi},\bm{\psi},\bm{\xi}}^J(\tau^{1:K},\bm{q}^{1:K},\bm{z}^{1:K,0:J}) &=& \frac{\sum_{k=1}^K\frac{P(\bm{Y}|\tau^k,\bm{q}^k)P(\tau^k,\bm{q}^k)}{Q_{\bm{\phi}}(\tau^k)H^J_{\bm{\psi},\bm{\xi}}(\bm{q}^k,\bm{z}^{k,0:J}|\tau^k)}}{\sum_{k=1}^K\frac{P(\bm{Y}|\tau^k,\bm{q}^k)P(\tau^k,\bm{q}^k)}{Q_{\bm{\phi}}(\tau^k)Q_{\bm{\psi}}(\bm{q}^k|\tau^k)}} Q_{\bm{\phi},\bm{\psi},\bm{\xi}}^{J}(\tau^{1:K},\bm{q}^{1:K},\bm{z}^{1:K,0:J});\\
h_{\bm{\phi},\bm{\psi},\bm{\xi}}^J(\tau^{1:K},\bm{q}^{1:K},\bm{z}^{1:K,0:J+1}) & = &\frac{\sum_{k=1}^K\frac{P(\bm{Y}|\tau^k,\bm{q}^k)P(\tau^k,\bm{q}^k)}{Q_{\bm{\phi}}(\tau^k)H^J_{\bm{\psi},\bm{\xi}}(\bm{q}^k,\bm{z}^{k,0:J}|\tau^k)}}{\sum_{k=1}^K\frac{P(\bm{Y}|\tau^k,\bm{q}^k)P(\tau^k,\bm{q}^k)}{Q_{\bm{\phi}}(\tau^k)H^{J+1}_{\bm{\psi},\bm{\xi}}(\bm{q}^k,\bm{z}^{k,0:J+1}|\tau^k)}} Q_{\bm{\phi},\bm{\psi},\bm{\xi}}^{J+1}(\tau^{1:K},\bm{q}^{1:K},\bm{z}^{1:K,0:J+1})\\
\end{array}
\right.
\end{equation*}
Integrating out $\bm{z}^{1:K,0:J}$ in $f_{\bm{\phi},\bm{\psi},\bm{\xi}}^J(\tau^{1:K},\bm{q}^{1:K},\bm{z}^{1:K,0:J}) $ yields
\begin{align*}
&\int f_{\bm{\phi},\bm{\psi},\bm{\xi}}^J(\tau^{1:K},\bm{q}^{1:K},\bm{z}^{1:K,0:J})\;d\bm{z}^{1:K,0:J} \\
=& \frac{1}{
\sum_{k=1}^K\frac{P(\bm{Y}|\tau^k,\bm{q}^k)P(\tau^k,\bm{q}^k)}{Q_{\bm{\phi}}(\tau^k)Q_{\bm{\psi}}(\bm{q}^k|\tau^k)}
}
\cdot\sum_{k=1}^KP(\bm{Y}|\tau^k,\bm{q}^k)P(\tau^k,\bm{q}^k)\int \left[\prod_{l\neq k}H^J_{\bm{\psi},\bm{\xi}}(\bm{q}^l,\bm{z}^{l,0:J}|\tau^l) Q_{\bm{\phi}}(\tau^l)\right]\prod_{l=1}^K R_{\bm{\xi}}(\bm{z}^{l,0:J}|\tau^l,\bm{q}^l)\;d\bm{z}^{1:K,0:J}\\
=&\frac{\sum_{k=1}^KP(\bm{Y}|\tau^k,\bm{q}^k)P(\tau^k,\bm{q}^k)\prod_{l\neq k}Q_{\bm{\phi}}(\tau^l)Q_{\bm{\psi}}(\bm{q}^l|\tau^l)}{\sum_{k=1}^K\frac{P(\bm{Y}|\tau^k,\bm{q}^k)P(\tau^k,\bm{q}^k)}{Q_{\bm{\phi}}(\tau^k)Q_{\bm{\psi}}(\bm{q}^k|\tau^k)}}\\
=&\frac{\sum_{k=1}^K \frac{P(\bm{Y}|\tau^k,\bm{q}^k)P(\tau^k,\bm{q}^k)}{Q_{\bm{\phi}}(\tau^k)Q_{\bm{\psi}}(\bm{q}^k|\tau^k)} \cdot \prod_{l=1}^KQ_{\bm{\phi}}(\tau^l)Q_{\bm{\psi}}(\bm{q}^l|\tau^l)}{\sum_{k=1}^K\frac{P(\bm{Y}|\tau^k,\bm{q}^k)P(\tau^k,\bm{q}^k)}{Q_{\bm{\phi}}(\tau^k)Q_{\bm{\psi}}(\bm{q}^k|\tau^k)}}\\
=&\prod_{l=1}^KQ_{\bm{\phi}}(\tau^l)Q_{\bm{\psi}}(\bm{q}^l|\tau^l)
\end{align*}
which just the joint variational distribution of $(\tau^{1:K},\bm{q}^{1:K})$. Therefore, $f_{\bm{\phi},\bm{\psi},\bm{\xi}}^J(\tau^{1:K},\bm{q}^{1:K},\bm{z}^{1:K,0:J})$ is a valid probability density function.

To prove $h_{\bm{\phi},\bm{\psi},\bm{\xi}}^J(\tau^{1:K},\bm{q}^{1:K},\bm{z}^{1:K,0:J+1})$ is a valid probability density function, it suffices to show
\[
\mathbb{E}_{Q_{\bm{\phi},\bm{\psi},\bm{\xi}}^{J+1}(\tau^{1:K},\bm{q}^{1:K},\bm{z}^{1:K,0:J+1})}\frac{\sum_{k=1}^K\frac{P(\bm{Y}|\tau^k,\bm{q}^k)P(\tau^k,\bm{q}^k)}{Q_{\bm{\phi}}(\tau^k)H^J_{\bm{\psi},\bm{\xi}}(\bm{q}^k,\bm{z}^{k,0:J}|\tau^k)}}{\sum_{k=1}^K\frac{P(\bm{Y}|\tau^k,\bm{q}^k)P(\tau^k,\bm{q}^k)}{Q_{\bm{\phi}}(\tau^k)H^{J+1}_{\bm{\psi},\bm{\xi}}(\bm{q}^k,\bm{z}^{k,0:J+1}|\tau^k)}} = 1.
\]
Let $\{I_k: I_k\subset\{0,\ldots,J+1\}, |I_k|=J+1, k=1,\ldots, K\}$ be uniformly distributed subsets with distinct indices from $\{0,\ldots,J+1\}$. 
Let $H^{J}_{\bm{\psi},\bm{\xi}}(\bm{q}^k,\bm{z}^{k,I_k}|\tau^k)=\frac{1}{J+1}\sum_{j\in I_k} \frac{Q_{\bm{\psi}}(\bm{z}^{k,j}|\tau^k)Q_{\bm{\psi}}(\bm{q}^k|\tau^k,\bm{z}^{k,j})}{R_{\bm{\xi}}(\bm{z}^{k,j}|\tau^k,\bm{q}^k)}$
and 
\[
Q_{\bm{\phi},\bm{\psi},\bm{\xi}}^{J}(\tau^{1:K},\bm{q}^{1:K},\bm{z}^{1:K,I_{1:K}}) = \prod_{k=1}^K H^J_{\bm{\psi},\bm{\xi}}(\bm{q}^k,\bm{z}^{k,I_k}|\tau^k) R_{\bm{\xi}}(\bm{z}^{k,I_k}|\tau^k,\bm{q}^k)Q_{\bm{\phi}}(\tau^k).
\]
By symmetry, we have
\[
\mathbb{E}_{I_{1:K}} Q_{\bm{\phi},\bm{\psi},\bm{\xi}}^{J}(\tau^{1:K},\bm{q}^{1:K},\bm{z}^{1:K,I_{1:K}})\prod_{k=1}^K R_{\bm{\xi}}(\bm{z}^{k,-I_k}|\tau^k,\bm{q}^k)=Q_{\bm{\phi},\bm{\psi},\bm{\xi}}^{J+1}(\tau^{1:K},\bm{q}^{1:K},\bm{z}^{1:K,0:J+1})
\]
where $-I_k = (0:J+1)\backslash I_k$, and thus
\begin{align*}
& \mathbb{E}_{Q_{\bm{\phi},\bm{\psi},\bm{\xi}}^{J+1}(\tau^{1:K},\bm{q}^{1:K},\bm{z}^{1:K,0:J+1})}\frac{\sum_{k=1}^K\frac{P(\bm{Y}|\tau^k,\bm{q}^k)P(\tau^k,\bm{q}^k)}{Q_{\bm{\phi}}(\tau^k)H^J_{\bm{\psi},\bm{\xi}}(\bm{q}^k,\bm{z}^{k,0:J}|\tau^k)}}{\sum_{k=1}^K\frac{P(\bm{Y}|\tau^k,\bm{q}^k)P(\tau^k,\bm{q}^k)}{Q_{\bm{\phi}}(\tau^k)H^{J+1}_{\bm{\psi},\bm{\xi}}(\bm{q}^k,\bm{z}^{k,0:J+1}|\tau^k)}}\\
=& \mathbb{E}_{I_{1:K}} \mathbb{E}_{Q_{\bm{\phi},\bm{\psi},\bm{\xi}}^{J}(\tau^{1:K},\bm{q}^{1:K},\bm{z}^{1:K,I_{1:K}})}\frac{\sum_{k=1}^K\frac{P(\bm{Y}|\tau^k,\bm{q}^k)P(\tau^k,\bm{q}^k)}{Q_{\bm{\phi}}(\tau^k)H^J_{\bm{\psi},\bm{\xi}}(\bm{q}^k,\bm{z}^{k,I_k}|\tau^k)}\prod_{k=1}^K R_{\bm{\xi}}(\bm{z}^{k,-I_k}|\tau^k,\bm{q}^k)}{\sum_{k=1}^K\frac{P(\bm{Y}|\tau^k,\bm{q}^k)P(\tau^k,\bm{q}^k)}{Q_{\bm{\phi}}(\tau^k)H^{J+1}_{\bm{\psi},\bm{\xi}}(\bm{q}^k,\bm{z}^{k,0:J+1}|\tau^k)}}\\
=& \mathbb{E}_{Q_{\bm{\phi}}(\tau^{1:K})}\mathbb{E}_{I_{1:K}} \int\int\frac{\sum_{k=1}^K\frac{P(\bm{Y}|\tau^k,\bm{q}^k)P(\tau^k,\bm{q}^k)}{Q_{\bm{\phi}}(\tau^k)}R_{\bm{\xi}}(\bm{z}^{k,0:J+1}|\tau^k,\bm{q}^k)\prod_{l\neq k} H^J_{\bm{\psi},\bm{\xi}}(\bm{q}^l,\bm{z}^{l,I_l}|\tau^l) R_{\bm{\xi}}(\bm{z}^{l,0:J+1}|\tau^l,\bm{q}^l)}{\sum_{k=1}^K\frac{P(\bm{Y}|\tau^k,\bm{q}^k)P(\tau^k,\bm{q}^k)}{Q_{\bm{\phi}}(\tau^k)H^{J+1}_{\bm{\psi},\bm{\xi}}(\bm{q}^k,\bm{z}^{k,0:J+1}|\tau^k)}}\; d\bm{z}^{1:K,0:J+1}\; d\bm{q}^{1:K}\\
=& \mathbb{E}_{Q_{\bm{\phi}}(\tau^{1:K})} \int\int\frac{\sum_{k=1}^K\frac{P(\bm{Y}|\tau^k,\bm{q}^k)P(\tau^k,\bm{q}^k)}{Q_{\bm{\phi}}(\tau^k)}R_{\bm{\xi}}(\bm{z}^{k,0:J+1}|\tau^k,\bm{q}^k)\prod_{l\neq k} H^{J+1}_{\bm{\psi},\bm{\xi}}(\bm{q}^l,\bm{z}^{l,0:J+1}|\tau^l) R_{\bm{\xi}}(\bm{z}^{l,0:J+1}|\tau^l,\bm{q}^l)}{\sum_{k=1}^K\frac{P(\bm{Y}|\tau^k,\bm{q}^k)P(\tau^k,\bm{q}^k)}{Q_{\bm{\phi}}(\tau^k)H^{J+1}_{\bm{\psi},\bm{\xi}}(\bm{q}^k,\bm{z}^{k,0:J+1}|\tau^k)}}\; d\bm{z}^{1:K,0:J+1}\; d\bm{q}^{1:K}\\
=& \mathbb{E}_{Q_{\bm{\phi}}(\tau^{1:K})} \int\int\frac{\sum_{k=1}^K\frac{P(\bm{Y}|\tau^k,\bm{q}^k)P(\tau^k,\bm{q}^k)}{Q_{\bm{\phi}}(\tau^k)H^{J+1}_{\bm{\psi},\bm{\xi}}(\bm{q}^k,\bm{z}^{k,0:J+1}|\tau^k)}\prod_{l=1}^K  H^{J+1}_{\bm{\psi},\bm{\xi}}(\bm{q}^l,\bm{z}^{l,0:J+1}|\tau^l) R_{\bm{\xi}}(\bm{z}^{l,0:J+1}|\tau^l,\bm{q}^l)}{\sum_{k=1}^K\frac{P(\bm{Y}|\tau^k,\bm{q}^k)P(\tau^k,\bm{q}^k)}{Q_{\bm{\phi}}(\tau^k)H^{J+1}_{\bm{\psi},\bm{\xi}}(\bm{q}^k,\bm{z}^{k,0:J+1}|\tau^k)}}\; d\bm{z}^{1:K,0:J+1}\; d\bm{q}^{1:K}\\
=&\mathbb{E}_{Q_{\bm{\phi}}(\tau^{1:K})} \int\int \prod_{l=1}^K  H^{J+1}_{\bm{\psi},\bm{\xi}}(\bm{q}^l,\bm{z}^{l,0:J+1}|\tau^l) R_{\bm{\xi}}(\bm{z}^{l,0:J+1}|\tau^l,\bm{q}^l)\; d\bm{z}^{1:K,0:J+1}\; d\bm{q}^{1:K}\\
=&1.
\end{align*}
Therefore, $h_{\bm{\phi},\bm{\psi},\bm{\xi}}^J(\tau^{1:K},\bm{q}^{1:K},\bm{z}^{1:K,0:J+1})$ is a valid probability density function.

\paragraph{Step 3} Now, we are ready to prove that $L^{K,J}_w(\bm{\phi},\bm{\psi},\bm{\xi}) \leq L^{K,J+1}_w(\bm{\phi},\bm{\psi},\bm{\xi}) \leq L^{K}(\bm{\phi},\bm{\psi}),\quad \forall J$. The gap between $L^{K,J}_w(\bm{\phi},\bm{\psi},\bm{\xi})$ and $L^{K}(\bm{\phi},\bm{\psi})$ is
\begin{align*}
&L^K(\bm{\phi},\bm{\psi}) - L^{K,J}_w(\bm{\phi},\bm{\psi},\bm{\xi}) \\
= & \mathbb{E}_{(\tau^{1:K},\bm{q}^{1:K},\bm{z}^{1:K,0:J})\sim Q_{\bm{\phi},\bm{\psi},\bm{\xi}}^{J}(\tau^{1:K},\bm{q}^{1:K},\bm{z}^{1:K,0:J})}
\log\left(\frac{\sum_{k=1}^K\frac{P(\bm{Y}|\tau^k,\bm{q}^k)P(\tau^k,\bm{q}^k)}{Q_{\bm{\phi}}(\tau^k)Q_{\bm{\psi}}(\bm{q}^k|\tau^k)}}{\sum_{k=1}^K\frac{P(\bm{Y}|\tau^k,\bm{q}^k)P(\tau^k,\bm{q}^k)}{Q_{\bm{\phi}}(\tau^k)H^J_{\bm{\psi},\bm{\xi}}(\bm{q}^k,\bm{z}^{k,0:J}|\tau^k)}}\right).\\
= & \mathbb{E}_{(\tau^{1:K},\bm{q}^{1:K},\bm{z}^{1:K,0:J})\sim Q_{\bm{\phi},\bm{\psi},\bm{\xi}}^{J}(\tau^{1:K},\bm{q}^{1:K},\bm{z}^{1:K,0:J})}
\log\left(\frac{Q_{\bm{\phi},\bm{\psi},\bm{\xi}}^{J}(\tau^{1:K},\bm{q}^{1:K},\bm{z}^{1:K,0:J})}{f_{\bm{\phi},\bm{\psi},\bm{\xi}}^J(\tau^{1:K},\bm{q}^{1:K},\bm{z}^{1:K,0:J})}\right) \\
= & \mathrm{KL}\left(Q_{\bm{\phi},\bm{\psi},\bm{\xi}}^{J}(\tau^{1:K},\bm{q}^{1:K},\bm{z}^{1:K,0:J})\| f_{\bm{\phi},\bm{\psi},\bm{\xi}}^J(\tau^{1:K},\bm{q}^{1:K},\bm{z}^{1:K,0:J})\right).
\end{align*}
This proves $L^{K,J}_w(\bm{\phi},\bm{\psi},\bm{\xi})\leq L^K(\bm{\phi},\bm{\psi})$. 
The gap between $L^{K,J}_w(\bm{\phi},\bm{\psi},\bm{\xi})$ and $L^{K,J+1}_w(\bm{\phi},\bm{\psi},\bm{\xi})$ is
\begin{align*}
&L^{K,J+1}_w(\bm{\phi},\bm{\psi},\bm{\xi}) - L^{K,J}_w(\bm{\phi},\bm{\psi},\bm{\xi}) \\
= & \mathbb{E}_{(\tau^{1:K},\bm{q}^{1:K},\bm{z}^{1:K,0:J+1})\sim Q_{\bm{\phi},\bm{\psi},\bm{\xi}}^{J+1}(\tau^{1:K},\bm{q}^{1:K},\bm{z}^{1:K,0:J+1})}
\log\left(\frac{\sum_{k=1}^K\frac{P(\bm{Y}|\tau^k,\bm{q}^k)P(\tau^k,\bm{q}^k)}{Q_{\bm{\phi}}(\tau^k)H^{J+1}_{\bm{\psi},\bm{\xi}}(\bm{q}^k,\bm{z}^{k,0:J+1}|\tau^k)}}{\sum_{k=1}^K\frac{P(\bm{Y}|\tau^k,\bm{q}^k)P(\tau^k,\bm{q}^k)}{Q_{\bm{\phi}}(\tau^k)H^J_{\bm{\psi},\bm{\xi}}(\bm{q}^k,\bm{z}^{k,0:J}|\tau^k)}}\right).\\
= & \mathbb{E}_{(\tau^{1:K},\bm{q}^{1:K},\bm{z}^{1:K,0:J+1})\sim Q_{\bm{\phi},\bm{\psi},\bm{\xi}}^{J+1}(\tau^{1:K},\bm{q}^{1:K},\bm{z}^{1:K,0:J+1})}
\log\left(\frac{Q_{\bm{\phi},\bm{\psi},\bm{\xi}}^{J+1}(\tau^{1:K},\bm{q}^{1:K},\bm{z}^{1:K,0:J+1})}{h_{\bm{\phi},\bm{\psi},\bm{\xi}}^J(\tau^{1:K},\bm{q}^{1:K},\bm{z}^{1:K,0:J+1})}\right).\\
= & \mathrm{KL}\left(Q_{\bm{\phi},\bm{\psi},\bm{\xi}}^{J+1}(\tau^{1:K},\bm{q}^{1:K},\bm{z}^{1:K,0:J+1})\| h_{\bm{\phi},\bm{\psi},\bm{\xi}}^J(\tau^{1:K},\bm{q}^{1:K},\bm{z}^{1:K,0:J+1})\right).
\end{align*}
This proves $L^{K,J}_w(\bm{\phi},\bm{\psi},\bm{\xi})\leq L^{K,J+1}_w(\bm{\phi},\bm{\psi},\bm{\xi})$. 

\end{document}